\newtheorem{theorem}{Theorem}
\newtheorem{lemma}[theorem]{Lemma}
\newtheorem{definition}{Definition}
\newtheorem*{proposition*}{Proposition}
\newtheorem*{lemma*}{Lemma}
\newtheorem*{corollary*}{Corollary}
\newcommand{\Vb}{\widehat{V}}
\newcommand{\normf}[1]{\|#1\|_{\textrm{F}}}
\newcommand{\normfs}[1]{\|#1\|^2_{\textrm{F}}}
\newcommand{\defeq}{\triangleq}
\newcommand{\Rom}[1]{\uppercase\expandafter{\romannumeral #1\relax}}
\newcommand{\rom}[1]{\lowercase\expandafter{\romannumeral #1\relax}}
\newcommand{\conv}{\operatorname{conv}}
\newcommand{\dom}{\operatorname{dom}}
\newcommand{\ovex}[3]{\ensuremath{\overset{#1}{#2}\vphantom{#2}^{#3}}}
\DeclareMathOperator{\sign}{sign}
\DeclareMathOperator{\prox}{prox}
\newcommand{\RR}{\mathbb{R}}
\newcommand{\NN}{\mathbb{N}}
\DeclareMathOperator*{\argmin}{arg\, min}
\newcommand{\asso}{\textsc{Asso}\xspace}
\newcommand{\grecond}{\textsc{Grecond}\xspace}
\newcommand{\mebf}{\textsc{mebf}\xspace}
\newcommand{\elbmf}{\textsc{Elbmf}\xspace}
\newcommand{\elb}{\elbmf}
\newcommand{\zhang}{\textsc{Zhang}\xspace}
\newcommand{\felb}{\textsc{Felb}\xspace}
\newcommand{\felbmu}{\textsc{Felb}$^\textsc{mu}$\xspace}
\newcommand{\oururl}{\href{https://osf.io/dkq56/?view_only=b32690aedb96491d87ac58d987b5fbbd}{https://osf.io/dkq56/}}
\title{Federated Binary Matrix Factorization\\using Proximal Optimization}
\author{%
Sebastian Dalleiger\\
KTH Royal Institute of Technology\\
\texttt{sdall@kth.se}\\
\And
Jilles Vreeken\\
CISPA Helmholtz Center for Information Security\\
\texttt{jv@cispa.de}\\
\And
Michael Kamp\\
Institute for AI in Medicine, UK Essen and\\
Ruhr University Bochum and Monash University\\
\texttt{michael.kamp@uk-essen.de}\\
}
\begin{document}

\maketitle

\begin{abstract}
    Identifying informative components in binary data is an essential task in many research areas, including life sciences, social sciences, and recommendation systems.
    Boolean matrix factorization (BMF) is a family of methods that performs this task by efficiently factorizing the data.
    In real-world settings, the data is often distributed across stakeholders and required to stay private, prohibiting the straightforward application of BMF.
    To adapt BMF to this context, we approach the problem from a federated-learning perspective, while building on a state-of-the-art continuous binary matrix factorization relaxation to BMF that enables efficient gradient-based optimization.
    We propose to only share the relaxed component matrices, which are aggregated centrally using a proximal operator that regularizes for binary outcomes.
    We show the convergence of our federated proximal gradient descent algorithm and provide differential privacy guarantees.
    Our extensive empirical evaluation demonstrates that our algorithm outperforms, in terms of quality and efficacy, federation schemes of state-of-the-art BMF methods on a diverse set of real-world and synthetic data.
\end{abstract}

\section{Introduction}
\label{sec:introduction}
Discovering patterns and dependencies in distributed binary data sources is a common problem in many applications, such as cancer genomics~\citep{Liang:2020:Bem}, recommendation systems~\citep{Ignatov:2014:Boolean}, and neuroscience~\citep{Haddad:2018:Identifying}. 
In particular, this data is often distributed horizontally (i.e., the rows of the data matrix are split across hosts) and may not be pooled. 
While biopsies are performed in different hospitals, each location measures the expression of a common set of genes.
Privacy regulations mandate that those measurements may not be shared, thereby limiting the amount of data and available to traditional machine learning methods.
Federated learning~\citep{mcmahan2017communication}, however, enables learning from distributed datasets without disclosing sensitive data.

Although there are methods for Federated Non-negative Matrix Factorization~\citep{li2021federated}, they are designed for real-valued data and do not achieve interpretable results for binary data~\cite{Miettinen:2008:Discrete,dalleiger2022efficiently}.
Like their non-federated Non-negative Matrix Factorization (NMF)~\citep{Paatero:1994:Positive, Lee:1999:Learning, Lee:2000:Algorithms} counterparts and other factorizations like Singular Value Decomposition~\citep{Golub:1996:Matrix}, Principal Component Analysis~\citep{Golub:1996:Matrix}, Federated NMF does not achieve interpretable results for binary data~\cite{Miettinen:2008:Discrete}. 

For \emph{centralized data}, Boolean matrix factorization (BMF) seeks to approximate a Boolean target matrix $A\in\{0,1\}^{n\times m}$ by the Boolean product 
\[
A\approx \left[U\circ V\right]_{ij}=\bigvee_{l\in [k]}U_{il}V_{lk}\enspace 
\]
of two low-rank Boolean factor matrices~\citep{Miettinen:2008:Discrete},
$U\in\{0,1\}^{n\times k}$ (\emph{feature matrix}) and $V\in\{0,1\}^{k\times m}$ (\emph{coefficient matrix}).
Although there are myriad heuristics to approximate this NP-hard problem, doing so for \emph{distributed data} without sharing private information remains an open problem.
Directly applying federated learning paradigms to BMF would mean to factorize locally and then aggregate centrally. This requires a function that yield valid aggregations, 
such as, \emph{rounded average}, \emph{majority vote}, or \emph{logical \emph{\texttt{or}}}. 
We depict the impact of such na\"ive yet valid aggregations in Fig.~\ref{fig:invader}(a), which highlights that even the best combination of a local factorization algorithm and an aggregation scheme
---here, \asso~\citep{Miettinen:2008:Discrete} using logical \texttt{or}---%
leads to bad reconstructions on a toy example. 

Recently, it was shown that continuously relaxing BMF into a regularized \emph{binary} matrix factorization problem using linear (rather than Boolean) algebra and proximal gradients, yields a highly efficient, highly scalable, approach with state-of-the-art performance \citep{dalleiger2022efficiently}.
Taking advantage of this approach, we propose a novel federated proximal-gradient method, \felb, that centrally, yet privacy-consciously aggregates non-sensitive coefficients using a  proximal-averaging aggregation scheme. 
As illustrated in Figure~\ref{fig:invader}(b), \felb achieves a nearly perfect reconstruction on the toy example. 
We demonstrate that our approach converges for a strictly monotonically increasing regularization rate. 
In principle, parallelization \emph{via} \felb allows us to scale up BMF, even if the data is centralized, to address problems where gradients are large.
Moreover, we show that applying the Gaussian mechanism~\citep{balle2018improving} guarantees differential privacy, and we empirically validate that the utility remains high.
We show that \felb outperforms baselines derived via straightforward parallelization of state-of-the-art BMF methods on numerous datasets. 

In summary, our main contributions are as follows:
\begin{itemize}[noitemsep,topsep=0pt,parsep=0pt,partopsep=0pt]
    \item We present \felb, a novel federated proximal-gradient-descent for BMF.
    \item We improve BMF regularization with a new adaptive proximal approach with \felbmu.
    \item We prove convergence and differential privacy guarantees for \felb.
    \item We experimentally show that \felb and \felbmu factorizes distributed Boolean matrices efficiently and accurately. %
\end{itemize}

\begin{figure}[t]
    \centering
    \begin{subfigure}{0.4\textwidth}
    \includegraphics[width=\textwidth]{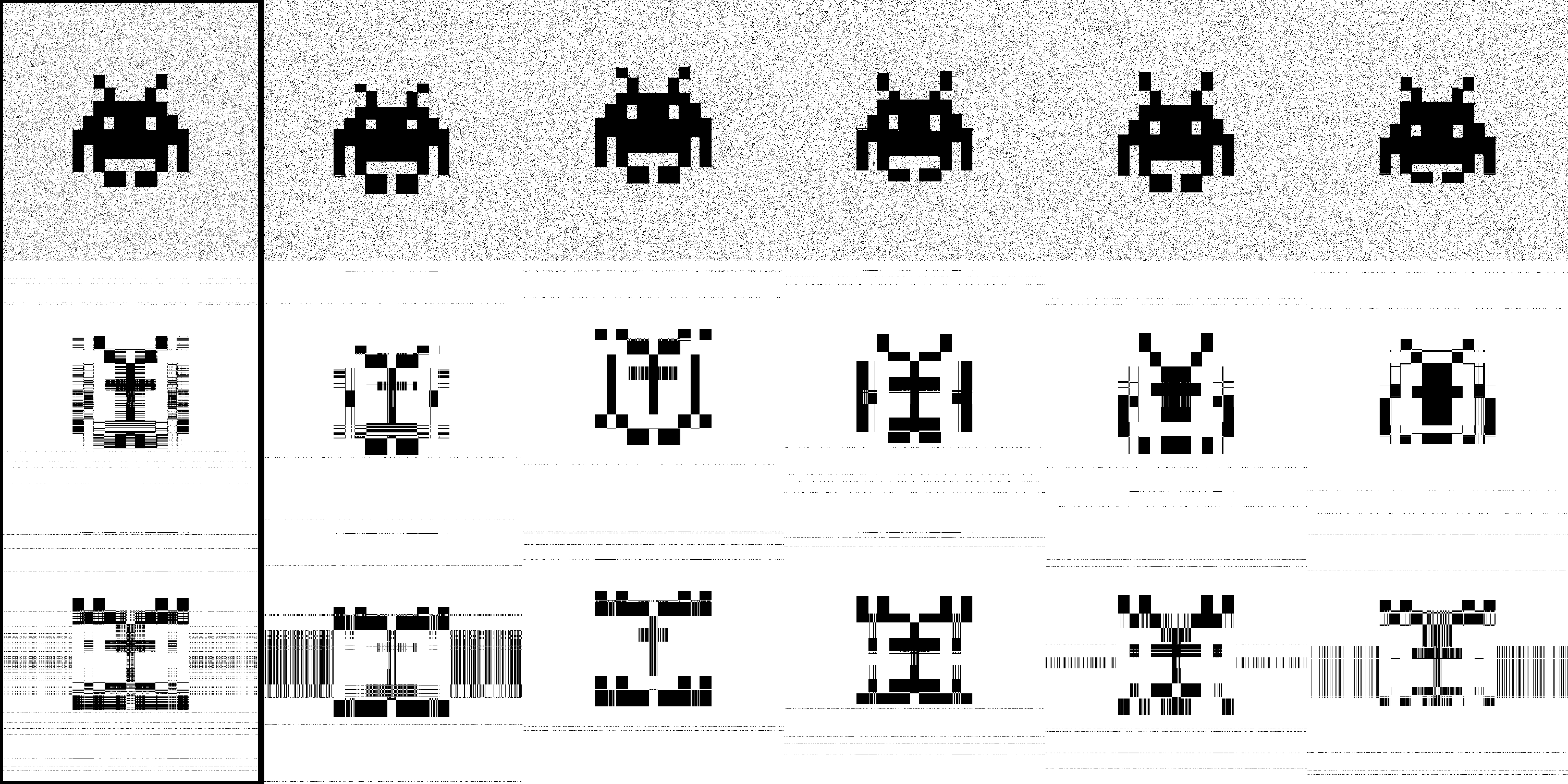}
    \caption{Aggregated \asso}
    \end{subfigure}\qquad
    \begin{subfigure}{0.4\textwidth}
    \includegraphics[width=\textwidth]{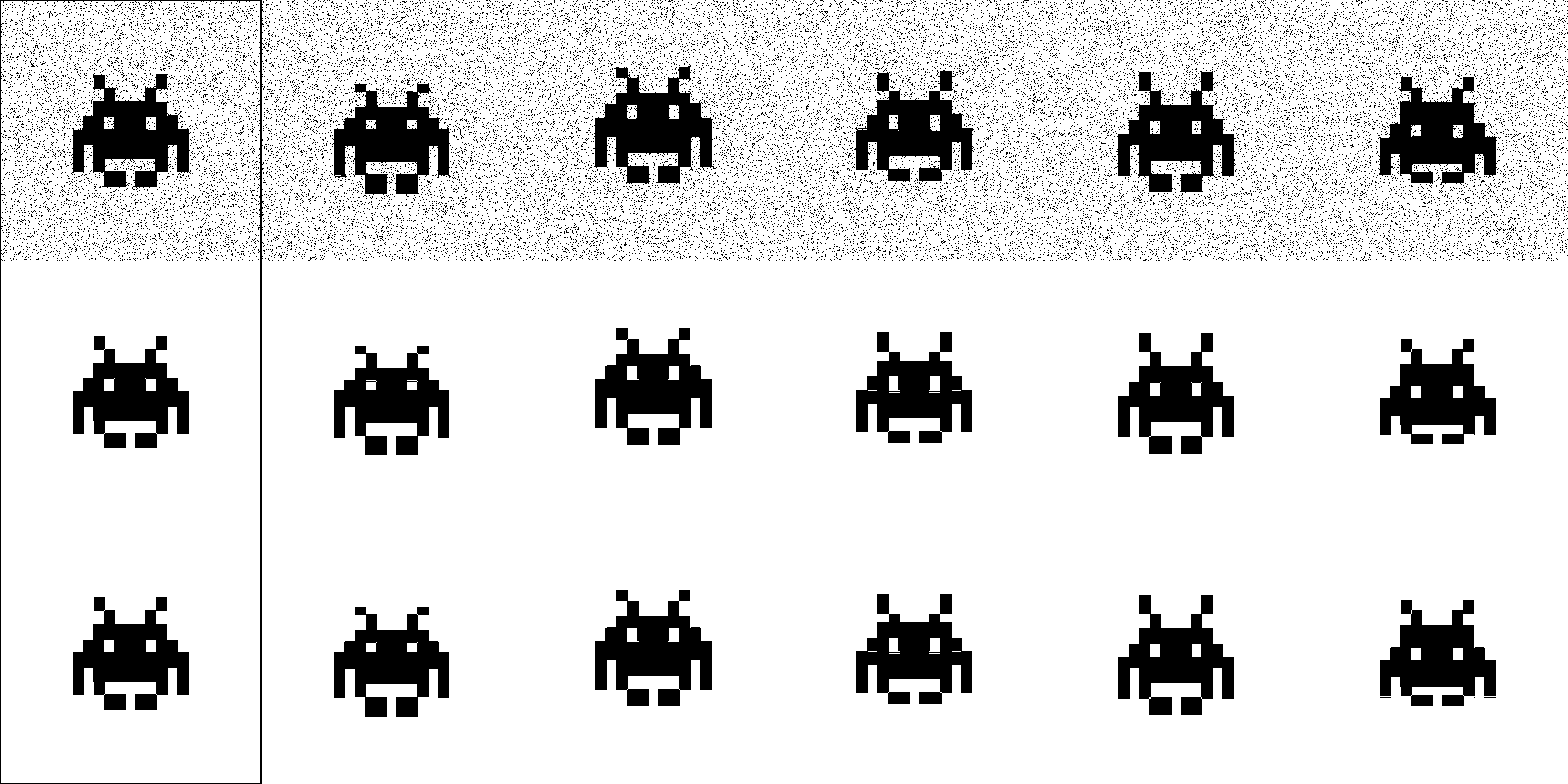}
    \caption{Proposed Method \felb}
    \end{subfigure}
    \caption{
    Our method reconstructs the input well. 
    Representing $1$s as black pixels, 
    for (a) \asso using 
    logical \texttt{or}
    and (b) our novel federated factorization called \felb,
    we show (top row) the client-data subjected to additive noise, 
    (middle row) the localized reconstructions, 
    and (bottom row) the aggregation-based reconstructions. 
    The left-most column shows centralized combination of the data resp. reconstructions of the five clients (columns 2--6).} 
    \label{fig:invader}
\end{figure}

\section{Related Work}
\label{sec:related}

To the best of our knowledge, there exists no federated BMF algorithms. 
We therefore primarily discuss the relations to \emph{BMF}, and \emph{federated factorization}, and  \emph{federated learning}.

We distinguish two classes of \textbf{BMF} methods: 
First, \emph{discrete optimization-based methods} that use Boolean algebra, such as \asso~\citep{Miettinen:2008:Discrete} using a set-cover-like approach,
\grecond~\citep{Belohlavek:2010:Discovery},
\mebf~\citep{Wan:2020:Fast} using fast geometric segmentation, or 
\textsc{Sofa}~\citep{Neumann:2020:Biclustering} based on streaming clustering.
Second, \emph{continuous optimization-based methods} that use linear algebra for solving the binary matrix factorization problem, introduced by \citet{Zhang:2007:Binary}, and advanced by 
\citet{Araujo:2016:Faststep} based on thresholding, 
and by 
Hess~et.~al~\citep{Hess:2017:Primping,Hess:2017:Csalt} using a proximal operator.
Combining ideas from the two complementary regularization strategies of \citet{Hess:2017:Primping} and \citet{Zhang:2007:Binary}, 
\citet{dalleiger2022efficiently} recently removed the need for post-processing via a proximal operator for an elastic-net-based regularizer.

With regards to \textbf{federated factorization} in general,
`parallel' algorithms for matrix factorization~\citep{yu2014parallel} as well as binary matrix factorization~\citep{khanna2013parallel} seek computational efficiency without addressing privacy concerns. 
The problem of matrix factorization for privacy-sensitive distributed data has been addressed by the federated-learning community with approaches for federated matrix factorization~\citep{du2021federated} and federated non-negative matrix factorization~\citep{li2021federated}.
These methods are, however, not specialized to Boolean matrices.
In this work, we seek to close the research gap, addressing the need for a federated, privacy-preserving binary (or Boolean) matrix factorization algorithm.

Recent advances in \textbf{federated learning} involve techniques like FedProx~\citep{li2020federated} and SCAFFOLD~\citep{karimireddy2020scaffold}. 
FedProx, an extension of FedAvg~\citep{mcmahan2017communication}, introduces a proximity penalty term to stabilizing the training process across different clients. 
SCAFFOLD enhances federated learning by correcting client drift using variance reduction techniques, thereby improving convergence rates and model accuracy compared to traditional methods like FedAvg,
while ProxSkip~\cite{Mishchenko:2022:ProxSkip} uses randomization to reduce the computational cost of proximal operators which are significantly more expensive than our operators.
Despite these advances, most research focuses on training deep neural networks using stochastic-gradient-based local optimization schemes. 
These approaches are often not ideal for factorizing matrices and are unsuitable for our case, as they neither incorporate constraint-penalties, nor do they handle alternating optimization problems, thereby achieving suboptimal empirical convergence towards infeasible non-Boolean solutions.

\section{Federated Proximal Binary Matrix Factorization}\label{sec:method}

Having contextualized our problem, we now formally introduce our federated Boolean matrix factorization scenario,
show how we separate our problem into manageable subproblems;
describe how to efficiently and solve subproblems in terms of binary matrix factorization relaxation, while preserving privacy;
and formally show that we compute a Boolean matrix factorization upon convergence.

The most pronounced difference between traditional and federated Boolean matrix factorization lies in data accessibility.
Rather than having all data $A\in\{0,1\}^{n\times m}$ accessible at one location, 
the data is given as (horizontally) partitioned matrices $A_i$ over $C\in\NN$ clients
\[
\exists A_1\in\{0,1\}^{n_1\times m},\dots,A_C\in\{0,1\}^{n_C\times m}: A=\begin{bmatrix}A_1,\cdots,A_C\end{bmatrix}^\top\enspace,
\]
such that $n = \sum_i n_i$.
We aim to discover a single \emph{shared} matrix $\widehat{V}\in\{0,1\}^{k\times m}$ containing shared feature components that are beneficial for all clients.
Due to privacy restrictions, we are however neither permitted to transmit matrices $A_i$ `offsite' (including to any other device), nor are we allowed {to be able} to draw conclusions about where components belong to.    
We want to factorize the data $A_i \approx U_i \circ \widehat{V}$
in terms of \emph{local} matrix $U_i\in\{0,1\}^{\frac{n}{C}\times k}$ (associating data to components), and one shared \emph{global} matrix $\widehat{V}\in\{0,1\}^{k\times m}$ (associating features into components). 
Without the knowledge of $U_i$, we \emph{cannot} estimate specific attributes of individual users (assuming sufficiently large client datasets). 
However, we \emph{can} estimate sets of commonly co-occurring attributes across all clients, such as common combinations of genetic markers that are indicative of a disease.

Locally computing $U_i$ for given $A_i$ and $\Vb$ is a regular Boolean matrix factorization.
However, computing the \emph{shared} $\Vb$ without access to $A_i$ and $U_i$ is not straightforward.
To enable the computing of a shared factor while still preserving privacy, we split the problem into subproblems $\Phi_i$, 
introducing a local \emph{but shareable} coefficient matrix $V_i\in\{0,1\}^{k\times m}$.
In a nutshell, we estimate a factorization for $\Phi_i$, combine local matrices $V_i\in\{0,1\}^{k\times m}$ into a shared matrix $\Vb$,
update $\Phi_i$, and repeat.
In a nutshell, we seek to optimize
\begin{equation}
    \argmin_{U, V, \Vb} \sum_i \Phi_i(U_i, V_i, \Vb) = \sum_i \normf{A_i - U_iV_i}  \;,
\end{equation}
specifying and solving the subproblems next.

\subsection{Local Subproblems and Clients}
A single subproblem at client $i \in \NN$, seeks to optimize $A_i \approx \left[U_i\circ V_i\right]_{ab}=\bigvee_{c\in [k]}U_{i,cl}V_{i,cb}$,
of two low-rank Boolean factor matrices~\citep{Miettinen:2008:Discrete}, $U_i\in\{0,1\}^{n_i\times k}$ (\emph{feature matrix}) and $V_i\in\{0,1\}^{k\times m}$ (\emph{coefficient matrix}).
As this problem is NP-complete~\citep{Miettinen:2008:Discrete},
solving it exactly is challenging for each client, even for relatively small matrices.
To solve this problem in practice even for large matrices, we resort to a continuous relaxation into a \emph{binary matrix factorization} problem that instead minimizes
\begin{equation}
    \normf{A_i-U_iV_i}^2+R(U_i)+R(V_i)\enspace ,
\end{equation}
for relaxed $U_i \in [0,1]^{n_i\times k}$ and $V_i \in [0,1]^{k\times m}$ 
with a \emph{binary-inducing regularizer} $R:\RR^{n'\times m'} \rightarrow \RR$,
enabling efficient gradient-based optimizations.
A regularizer that encourages binary solutions combines two elastic-nets (centered at $0$ and $1$, resp.),
\begin{equation}
    R_{\kappa\lambda}(X)=\sum_{x\in X}\min\left\{\kappa\|x\|_1+\lambda\|x\|_2^2,\kappa\|x-1\|_1+\lambda\|x-1\|_2^2\right\}
    \label{eq:elb_reg}
\end{equation}
into the almost W-shaped \textsc{Elb}-regularizer~\citep{dalleiger2022efficiently}.
To encourage that $V_i$ converges to $\Vb$, we introduce a proximity penalty term $P(V_i) = \gamma\normf{V_i - \Vb}^2$, yielding our global objective 
\begin{equation}
    \argmin_{U, V} \Phi_i(U_i, V_i) = \sum_i \normf{A_i - U_iV_i} + R(U_i) + R(V_i) + P(V_i)  \;.
\end{equation}
Even though now unconstrained, this problem is still challenging due to being non-convex.
We solve this joint objective by first splitting it in two subproblems, solving them alternating 
\begin{equation*}
    \begin{split}
        U_i^{t}=\argmin_{U} \normf{A_i-UV_i^{t-1}}^2 + R(U)\enspace\enspace\text{and}\enspace\enspace
        V_i^{t}=\argmin_{V} \normf{A_i-U_i^{t-1}V}^2 + R(V) + P(V)\;. \\
    \end{split}
\end{equation*}
Because each individual objective remains a challenge due to the non-convexity,
we require an optimization algorithm that is capable of solving such non-convex problems.
To this end, we employ the \emph{inertial proximal alternating linear minimization} (iPALM) technique~\cite{Pock:2016:Inertial}, which will guarantee convergence~\cite{Attouch:2013:Convergence,Bolte:2014:Proximal} as detailed in Sec.~\ref{sec:convergence}.

\paragraph{Proximal Alternating Linear Minimization}
At the core of iPALM, each regularized objective for $U_i$ and $V_i$ are solved using a proximal gradient approach,
which separates loss from regularizer.
That is, after taking a gradient step concerning our linear least-squares loss $f$, e.g., $f(U) \gets \normf{A_i-UV_i^{t-1}}^2$,
we then take a scaled proximal step regarding regularizer to project the gradient towards a feasible Boolean solution and towards a proximity to $\Vb$ for $V_i$.
A proximal operator is the projection
\begin{equation}
    \prox_{\eta}(X) = \argmin_{Y} \frac{\eta}{2}\normf{X - Y}^2 + R(X)
\end{equation}
of the result of the gradient step $x - x \eta \nabla_x f(x)$ for the loss $f$, 
into the proximity of a regularized solution $R(X)$. 
With regards to our regularizer $R$ and $P$, these proximal problems lend themselves for deriving first-order optimal and efficiently-computable closed-form solutions:
The \emph{Boolean proximal operator} for $R$ is element-wise computable $[ \prox_{\kappa\lambda}(X_{ik}) ]_{ij}$~\cite{dalleiger2022efficiently} where
\begin{equation}
    \prox^r_{\kappa\lambda}(x)=(1+\lambda)^{-1}\begin{cases}x-\kappa\sign(x)&\text{ if }x\leq\frac12\\
        x-\kappa\sign(x-1)+\lambda &\text{ otherwise}\end{cases}\enspace .
        \label{eq:elb_prox}
\end{equation}
The \emph{$\Vb$-proximity proximal operator} for $P$ is simply a weighted average
\begin{equation}
    \prox^p_{\gamma}(X) = [1+\gamma]^{-1} (X + \gamma \Vb) \;.
    \label{eq:prox_prox}
\end{equation}
Together, they yield the alternating update rules
\begin{align}
    U_i^{t+1}&=\prox^r_{\nu\kappa,\nu\lambda}(U_i^t-\nu\nabla^t_{U_i}\normf{A_i-U_i^tV_i^t}^2) \nonumber\\ 
    V_i^{t+1}&=\prox^p_{\xi\gamma} \prox^r_{\xi\kappa,\xi\lambda}(V_i^t-\xi\nabla_{V_i}\normf{A_i-U_i^tV_i^t}^2)\;,
    \label{eq:update}
\end{align}
for $\nu = \eta_{U_i}^t$ and $\xi = \eta^t_{V_i}$.
To apply these rules, we require step sizes, utilizing linear nature of the loss`',
we propose two alternatives: first we use the gradient Lipschitz constant $L$ for $\eta = \sfrac1L$, yielding the update rule for \felb. 
Second we employ \citet{Lee:2000:Algorithms}'s \emph{multiplicative update rule} (MU) for NMF with step size matrices $\eta^t_{U_i} = U_i \oslash U_iV_iV_i^{\top}$ and $\eta^t_{V_i} = V_i \oslash U_i^{\top}U_iV_i$ using the Hadamard division $\oslash$, containing individual step sizes for all elements in $U_i$ and $V_i$, yielding \felbmu.

\subsection{Global Objective and Server}
Now having established our per client subproblems,
we now combine the local subobjectives into one global objective
\begin{equation}
    \Phi(U, V, \Vb) = \sum_i \Phi_i(U_i, V_i, \Vb) = \sum_i \normf{A_i - U_iV_i} + R(U_i) + R(V_i) + R(\Vb) + P(V_i) \;,
\end{equation}
focusing on shared coefficients components $\Vb$.
To estimate the shared matrix $\Vb$ \emph{independent} of all data matrices $A_i$ and local basis matrices $U_i$, 
we have to combine $V_i$ matrices.
In federated learning, this is often done by aggregating all $V_i$ as the average $\Vb$.
However, doing so here does not necessarily yield valid results:
na\"ive averaging results in aggregates that are far from being binary, thus hindering or even preventing convergence. 
Addressing this aggregation problem, we aim to result in a Boolean matrix, for which we iteratively project the aggregate towards a valid Boolean values
\begin{equation}
    \Vb \gets \argmin_{\Vb} \sum_i \normf{\Vb - V_i} + R(\Vb) \;,
\end{equation}
for which we employ a proximal aggregation yielding the update-step %
\(
    \Vb \gets \prox^a_{\eta_{\Vb}\kappa,\eta_{\Vb}\lambda} \frac{1}{c}\sum V_i \;.
\)
To theoretically guarantee that privacy is preserved, clients may further distort the matrices $V_i$ before transmission,
thus ensuring \emph{differential privacy}, as described next.

\subsection{Guaranteeing Differential Privacy}
\label{sec:privacy}
The proposed aggregation approach only shares coefficient matrices, so that no direct relationships between observations are shared. 
An attacker or a curious server can, however, attempt to infer private data from coefficients $V_i$. 
Aiming to prevent this, we guarantee differential privacy using an additive noise mechanisms, where, in a nutshell, each client adds noise before it transmits $V_i$ to the server.
We consider the Bernoulli, Gaussian, and Laplacian mechanisms, which differ in the noise distribution. 
More formally, we achieve $(\epsilon,\delta)$-differential privacy using a Gaussian mechanism as follows.  
\begin{definition}[\citet{dwork2014algorithmic}]
For $\epsilon,\delta>0$, a randomized algorithm $\mathcal{A}:\mathcal{X}\rightarrow\mathcal{Y}$ is $(\epsilon,\delta)$-differentially private (DP) if 
\[
P\left(\mathcal{A}(X)\in S\right)\leq e^\epsilon P\left(\mathcal{A}(X')\in S\right)+\delta\enspace 
\]
holds for each subset $S\subset\mathcal{Y}$ and for all pairs of neighboring inputs $X,X'$ .
\end{definition}
Applying Gaussian noise with $0$ mean and $\sigma$ variance to the local coefficients $V_i$ before sending ensures $(\epsilon,\delta)$-DP for $\sigma=\Delta\epsilon^{-1}\sqrt{2\log(5/(4\delta))}$~\citep{balle2018improving}, where 
\(
\Delta=\sup_{X,X'}\|\mathcal{A}(X)-\mathcal{A}(X')\|
\)
is the sensitivity of $\mathcal{A}$. 
To ensure bounded sensitivity, we clip all $V_i$ with clipping threshold $\theta>1$~\citep{noble2022differentially}. 
Similarly, adding $0$-mean $\Delta\epsilon^{-1}$-variance Laplacian noise achieves $(\epsilon, 0)$-DP~\citep{dwork2006calibrating}.

\SetKwFor{local}{Locally}{do}{}
\SetKwFor{coord}{At server}{do}{}
\begin{algorithm}[t]
    \caption{Federated Binary Matrix Factorization with \felb}
    \small
    \label{alg:feddc}
    \KwIn{distributed target matrices $A^1,\dots,A^C$, component-count $k$}
    \KwOut{local feature matrices $U_1,\dots,U_C$, global coefficient matrix $V$}
    initialize $U_i,V_i$ for $i\in [C]$ uniformly at random\;
    \local{at client $i$ in iteration $t$}{
        $U_i \gets \prox^r_{\eta_{U_i}\kappa,\eta_{U_i}\lambda_t}( U_i - \eta_{U_i}\nabla_{U_i}\normf{A_i-U_iV_i}^2)$\;
        $V_i \gets \prox^r_{\eta_{V_i}\kappa,\eta_{V_i}\lambda_t} ( V_i - \eta_{V_i}\nabla_{V_i}\normf{A_i-U_iV_i}^2)$\;
        $V_i \gets \prox^p_{\eta_{V_i}\gamma}(V_i)$ \;
        \If{$t ~\textnormal{mod}~ b = 0$}{
            \If{is differentially private}{$V_i \gets V_i \oplus N, N\sim\mathcal{N}(0, \sigma)$\;}
            transmit $V_i$ to the server\;
            receive $\widehat{V}$ from the server\;
            let $V_i \gets \widehat{V}$\;
        }
    }
    \coord{}{
        receive $V_1,\dots,V_C$\;
        aggregate $\widehat{V}\leftarrow\prox^a_{\kappa\lambda}\left(\frac{1}{C}\sum_{i=1}^CV_i\right)$\;
        transmit $\widehat{V}$ to each client
    }
    \Return{$U$, $\widehat{V}$}\;
\end{algorithm}

\subsection{Convergence Analysis}
\label{sec:convergence}
Having ensured differential privacy, we summarize our algorithm.  %
We call the combination of this proximal aggregation with local proximal-gradient optimization steps the \felb algorithm, detailed in Alg.~\ref{alg:feddc}:  %
Local factors $U_i, V_i$ are initialized uniformly at random (line 1), and at each client in round $t$ (line 2), we update the local factor matrices (lines 4 and 5). 
Every $b$ rounds, we transmit the local matrices $V_i$ to the server (line 7). 
At this point, each client may choose to preserve differential privacy. 
The server, receives all local coefficients $V_i$ (line 11), averages the matrices, and applies the proximal-operator (line 12). 
The aggregate is then transmitted to all clients (line 13). 
Upon receiving the aggregate (lines 8 and 9), each client continues with the next optimization round.

Next, to formally ascertain that Alg.~\ref{alg:feddc} solves our problem, we show that the algorithm converges with Thm.~\ref{thm:global_convergence_main}, and achieves Boolean coefficients in the limit with Thm.~\ref{thm:bool_convergence_main}.

\begin{theorem}[Convergence]
    \label{thm:global_convergence_main}
    For the sequence generated by Alg.~\ref{alg:feddc} \(\{z^t \defeq (\{U^t_i\}_i, \{V^t_i\}_i, \bar{V}^t) \}_{k\in\NN}\),
    the objective function $\Phi(z^t)$ converges to a stable solution $\Phi(z^t) \to \widehat{\Phi}$ if $t \to \infty$.
\end{theorem}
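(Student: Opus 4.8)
The plan is to recognize Algorithm~\ref{alg:feddc} as an instance of inertial proximal alternating linearized minimization (iPALM) on the \emph{single} combined objective $\Phi$ over all block variables $(\{U_i\}_i,\{V_i\}_i,\Vb)$, and then to invoke the descent-based convergence framework of~\citet{Bolte:2014:Proximal,Pock:2016:Inertial}. First I would split $\Phi$ into a smooth coupling term $H(\{U_i\}_i,\{V_i\}_i,\Vb)=\sum_i\normfs{A_i-U_iV_i}+\sum_iP(V_i)$ and the separable, proper, lower-semicontinuous nonsmooth part $\sum_iR(U_i)+\sum_iR(V_i)+R(\Vb)$. The structural facts to verify are: $H$ has block-wise Lipschitz-continuous gradients (the reconstruction terms are bilinear least squares, Lipschitz on the bounded feasible region, and each proximity term $P(V_i)=\gamma\normfs{V_i-\Vb}$ is quadratic, hence globally gradient-Lipschitz); each $R$ is bounded below by $0$, so $\Phi\geq 0$; and $\Phi$ has bounded level sets because the elastic-net terms in $R$ are coercive, which keeps the iterates inside a compact set where uniform block-Lipschitz constants exist.

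Next I would match each update in Alg.~\ref{alg:feddc} to a block proximal-gradient step on $H$. The $U_i$-update in~\eqref{eq:update} is directly the prox-gradient step for block $U_i$ with step size $\eta_{U_i}=\sfrac{1}{L}$. For the $V_i$-update I would exploit that $P(V_i)$ is smooth: the operator $\prox^p_{\xi\gamma}$ of~\eqref{eq:prox_prox} is precisely the gradient (equivalently, prox) step for the quadratic $P$, so composing it after $\prox^r$ realizes a single prox-gradient step whose smooth part is $\normfs{A_i-U_iV_i}+P(V_i)$ and whose nonsmooth part is $R$. The server aggregation (line~12) is exactly the $\Vb$-block step, since minimizing the smooth coupling $\sum_iP(V_i)$ over $\Vb$ gives the minimizer $\tfrac1C\sum_iV_i$, onto which $\prox^a$ then attaches $R(\Vb)$.

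With the algorithm written in this form, the sufficient-decrease lemma for prox-gradient steps with step size at most $\sfrac{1}{L}$ per block---augmented, for iPALM, by a surrogate energy $\Psi(z^t,z^{t-1})=\Phi(z^t)+\tfrac{\alpha}{2}\|z^t-z^{t-1}\|^2$ under a compatible step-size/inertia condition---yields
\[
\Psi(z^{t+1},z^t)\leq\Psi(z^t,z^{t-1})-\tfrac{c}{2}\|z^{t+1}-z^t\|^2
\]
for some $c>0$, because every block is touched once per round and the per-block decreases accumulate. Since $\Phi\geq 0$ and the inertia terms are nonnegative, $\{\Psi(z^t,z^{t-1})\}$ is monotone nonincreasing and bounded below, hence convergent; summing the inequality gives $\sum_t\|z^{t+1}-z^t\|^2<\infty$, so $\|z^{t+1}-z^t\|\to 0$ and $\Phi(z^t)$ and $\Psi(z^t,z^{t-1})$ share the same limit $\widehat\Phi$, the claimed stable solution.

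The hard part will be the $V_i$ block: reconciling the \emph{composition} $\prox^p\circ\prox^r$ with one genuine prox-gradient step so that the descent lemma still delivers a quantifiable decrease, rather than naively bounding the two proximal maps separately. A secondary obstacle is the regularization schedule---if $\lambda_t$ is strictly increasing, then $\Phi$ is not a fixed function across rounds, and I would additionally control the cross term $(\lambda_{t+1}-\lambda_t)$ times the regularization contribution, which is bounded on the compact feasible set and vanishes as the iterates approach $\{0,1\}$, via a telescoping or summability argument, so that monotone convergence of the objective values is preserved.
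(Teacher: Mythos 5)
Your proposal is correct in spirit and reaches the stated conclusion by a genuinely different route than the paper. The paper never sets up a joint analysis over all blocks: it proves a \emph{per-client} convergence lemma (Lem.~\ref{thm:local_convergence}) by checking the three ingredients of \citet{Attouch:2013:Convergence}'s descent framework---sufficient decrease (Lem.~\ref{lem:sufficient_decrease}), a subgradient lower bound (Lem.~\ref{lem:bounded_grads}), and the K\L{} property (Lem.~\ref{lem:klp})---and then obtains Thm.~\ref{thm:global_convergence_main} by summing the per-client inequalities $\Phi_i(z_i^{t+1}) \leq \Phi_i(z_i^{t}) - \rho_i \normfs{z_i^{t+1} - z_i^{t}}$ and using $\normf{z_i^{t+1}-z_i^t}\to 0$. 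Your single-objective iPALM view buys three things the paper's argument lacks: (i) the server aggregation is charged against $\Phi$ as a genuine $\Vb$-block step (your observation is right: the exact minimizer $\frac{1}{C}\sum_i V_i$ of the coupling term coincides with a gradient step of step size $1/L$ for that block, so $\prox^a$ applied to it is a legitimate prox-gradient update), whereas the paper's global chain of inequalities silently treats $\Vb$ as fixed inside each $\Phi_i$ while the server changes it; (ii) for convergence of the \emph{values} $\Phi(z^t)$ you need only sufficient decrease plus lower boundedness, so you can dispense with the K\L{} machinery entirely---the paper needs K\L{} only because its local lemma asserts iterate convergence to critical points, which is then reused for Thm.~\ref{thm:bool_convergence_main}; and (iii) you flag the time-varying $\lambda_t$, which the paper's convergence proof ignores. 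Conversely, the paper's per-client route yields the bounded-dissimilarity and critical-point statements that its Boolean-convergence argument later relies on, which your value-convergence argument alone would not provide.

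One caution on the step you yourself call the hard part: your current phrasing of it is not yet correct. Composing $\prox^p_{\xi\gamma}$ after $\prox^r_{\xi\kappa,\xi\lambda}$ does \emph{not} realize a single prox-gradient step with smooth part $\normfs{A_i-U_iV_i}+P(V_i)$ and nonsmooth part $R$: the prox of a sum is not the composition of the proxes, and $\prox^p$ is a backward (implicit) step for $P$, not the forward gradient evaluation the descent lemma expects. To make the accounting rigorous you must apply the descent lemma twice---once for the $(f,R)$ prox-gradient step producing an intermediate $W$, once for the $\prox^p$ step producing $V_i^{t+1}$---and then control $f(V_i^{t+1})+R(V_i^{t+1})-f(W)-R(W)$, which is only first order in $\normf{V_i^{t+1}-W}=\frac{\xi\gamma}{1+\xi\gamma}\normf{W-\Vb}$ and therefore does not telescope for free. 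This is precisely where the paper's own Lem.~\ref{lem:sufficient_decrease} is weakest (its ``transitivity argument'' and concluding chain of inequalities essentially assume the monotonicity they are meant to prove), so the gap is shared rather than introduced by you; but your proof is not complete until this two-stage decrease, together with your summability argument for the drifting $\lambda_t$, is carried out explicitly.
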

\vspace*{-\baselineskip}
\begin{proof}(Sketch, full proof in Apx.~\ref{sec:apx:convergence}).~~%
    We show the objective's convergence to a stable solution $\Phi^\ast$ by initially establishing the convergence of each client, where we observe a \emph{sufficient reduction} in local objectives, as well as a \emph{bounded dissimilarity} to $\Vb$. 
    Leveraging this, we establish global convergence by showing that the global loss gradient is bounded by a \emph{diminishing term}, showing that $\Phi(z^{t})$ approaching a constant $\widehat{\Phi}$ as $t$ tends to infinity.
\end{proof}

\begin{theorem}[Boolean Convergence]
    \label{thm:bool_convergence_main}
    If $\lambda^t$ is a monotonically increasing sequence with $\lambda^{t-1} \leq \lambda^{t}$, $\lim \lambda^t \to \infty$, and $\lambda^{t} - \lambda^{t-1} \leq \infty$,
    then $V_1^T, \dots, V_c^T$ and $\Vb^T$ from the sequence generated by Alg.~\ref{alg:feddc} convergences as $\lim_{T \to \infty} \operatorname{dist}(\Vb^T, \{0,1\}) \to 0$ to a Boolean matrix.
\end{theorem}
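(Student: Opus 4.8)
The plan is to exploit that the $\lambda$-weighted part of the \textsc{Elb}-regularizer $R_{\kappa\lambda}$ in \eqref{eq:elb_reg} penalizes exactly the squared distance to $\{0,1\}$, and to couple this with the boundedness of the objective guaranteed by Thm.~\ref{thm:global_convergence_main}. Concretely, for any scalar $x$ the two elastic-net branches in \eqref{eq:elb_reg} are nonnegative, so dropping the $\kappa$-terms gives the elementwise lower bound
\[
\min\{\kappa|x| + \lambda x^2,\ \kappa|x-1| + \lambda(x-1)^2\} \geq \lambda \min\{x^2, (x-1)^2\} = \lambda\,\operatorname{dist}(x, \{0,1\})^2,
\]
and hence $R_{\kappa\lambda}(X) \geq \lambda \sum_{x \in X} \operatorname{dist}(x,\{0,1\})^2$ for every matrix $X$.

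First I would invoke Thm.~\ref{thm:global_convergence_main}: since $\Phi(z^t) \to \widehat{\Phi} < \infty$, the sequence $\{\Phi(z^t)\}$ is bounded by some constant $M$. Because the reconstruction loss, the proximity penalty $P$, and every regularizer term are nonnegative, each summand of $\Phi$ is at most $M$; in particular $R_{\kappa\lambda^t}(\Vb^t) \leq M$ and $R_{\kappa\lambda^t}(V_i^t) \leq M$ for all $i$ and $t$ (recall $\Phi$ contains an explicit $R(\Vb)$ term, so $\Vb$ is regularized directly and need not be treated through consensus). Combining with the lower bound above yields
\[
\sum_{x \in \Vb^t} \operatorname{dist}(x,\{0,1\})^2 \leq \frac{M}{\lambda^t}, \qquad \sum_{x \in V_i^t} \operatorname{dist}(x,\{0,1\})^2 \leq \frac{M}{\lambda^t}.
\]
Since $\lambda^t \to \infty$ by assumption, the right-hand sides vanish, so $\operatorname{dist}(\Vb^T,\{0,1\}) \to 0$ and $\operatorname{dist}(V_i^T,\{0,1\}) \to 0$; passing from the elementwise sum to the matrix distance is routine, as the latter is controlled by the square root of the former. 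As an independent sanity check on the server step, the explicit proximal form \eqref{eq:elb_prox} gives the same conclusion directly: $\prox^r_{\kappa\lambda}(x)$ equals $(1+\lambda)^{-1}(x-\kappa\sign(x))$ for $x\le\tfrac12$ and $1+(1+\lambda)^{-1}(x-1-\kappa\sign(x-1))$ otherwise, so for any bounded input its distance to $\{0,1\}$ is $O(1/\lambda)\to 0$, confirming that $\Vb^T = \prox^a_{\kappa\lambda^T}(\tfrac1C\sum_i V_i^T)$ is driven to Boolean values as long as the averaged input stays bounded (which follows from boundedness of the iterates in Thm.~\ref{thm:global_convergence_main}).

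The main obstacle I anticipate is justifying the boundedness $\Phi(z^t)\le M$ in the regime $\lambda^t\to\infty$ without circularity: the very conclusion we want (vanishing distances) is what prevents the growing $\lambda^t$-weighted regularizer from blowing the objective up. The resolution is that Thm.~\ref{thm:global_convergence_main} establishes convergence of $\Phi$ from the sufficient-decrease dynamics of iPALM rather than from any a priori binarity, so its finite limit $\widehat{\Phi}$ is available as an independent input here. Making this rigorous requires checking that the per-step increase of the regularizer caused by raising $\lambda^{t-1}$ to $\lambda^t$ is dominated by the sufficient decrease of the proximal-gradient step; this is precisely where the monotonicity $\lambda^{t-1}\le\lambda^t$ and the controlled-increment condition on $\lambda^t-\lambda^{t-1}$ enter, and I would verify that these conditions keep the telescoped objective change summable so that $\{\Phi(z^t)\}$ stays bounded, while reserving $\lambda^t\to\infty$ for the final step that forces the distances to zero.
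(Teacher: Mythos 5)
Your proposal contains two arguments, and their roles should be swapped. The primary (energy) argument has a genuine gap exactly where you flag it: you need $R_{\kappa\lambda^t}(V_i^t)\le M$ and $R_{\kappa\lambda^t}(\Vb^t)\le M$ uniformly in $t$, i.e., boundedness of the objective \emph{evaluated with the time-varying weight} $\lambda^t$. Theorem~\ref{thm:global_convergence_main} does not supply this: its proof in Apx.~\ref{sec:apx:convergence} is a fixed-objective sufficient-decrease/telescoping argument in the style of \citet{Attouch:2013:Convergence}, and when $\lambda^t$ grows, every iteration injects an exogenous increase of order $(\lambda^{t}-\lambda^{t-1})\sum_{x}\min\{x^2,(x-1)^2\}$ that the telescoping does not absorb. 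Your proposed repair---showing that these injected increments are summable---cannot be completed from the stated hypotheses: the condition $\lambda^{t}-\lambda^{t-1}\le\infty$ is vacuous (every finite increment satisfies it), and the schedule the paper actually uses, $\lambda_t=\lambda\cdot 1.05^t$, has geometrically growing increments, so summability would require the distances $\min\{x^2,(x-1)^2\}$ to decay geometrically---which is a quantitative form of the very conclusion being proved. So the circularity you worry about is real, and your write-up does not resolve it.

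The argument you relegate to a sanity check is, in substance, the paper's entire proof, and it avoids this problem: every $V_i^{t}$ and every aggregate $\Vb^{t}$ is by construction the output of $\prox^r_{\kappa\lambda^t}$ applied to a bounded input (Thm.~\ref{thm:global_convergence_main} and Lem.~\ref{thm:local_convergence} are invoked only for boundedness and stability of iterates and gradients, not for objective values), and the closed form \eqref{eq:elb_prox} gives $\operatorname{dist}\bigl(\prox^r_{\kappa\lambda^t}(x),\{0,1\}\bigr)\le (|x|+\kappa+1)/(1+\lambda^t)\to 0$. No bound on the $\lambda^t$-weighted objective \emph{value} is needed, only on the prox \emph{inputs}. (The paper additionally argues stability: once a binary aggregate is received, the gradient step cannot move entries across $\sfrac12$, so the prox keeps them binary thereafter; your proposal omits this but it is not needed for the distance statement itself.) You should promote the prox-formula argument to your main proof. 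Your elastic-net lower bound $R_{\kappa\lambda}(X)\ge\lambda\sum_{x\in X}\operatorname{dist}(x,\{0,1\})^2$ is correct and would yield an elegant alternative route, but only once uniform boundedness of the $\lambda^t$-weighted objective is established---something neither your proposal nor, for that matter, the paper's own convergence analysis currently provides.
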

\vspace*{-\baselineskip}
\begin{proof}(Sketch, full proof in Apx.~\ref{sec:apx:convergence}).~~%
    Since gradients are bounded and diminish, 
    we only need to show that the proximal operator returns Boolean solutions in the limit.
    As our gradients are Lipschitz continuous, bounded, and ensured to converge to a stable solution, 
    our scaled proximal operator projects values onto Boolean results,
    for a monotonically increasing regularizer rate $\lambda^t$ that approaches infinity in the limit, guaranteeing a stable Boolean convergence regardless of communication rounds.
\end{proof}

\section{Experiments}
\label{sec:experiments}

\paragraph{Competitors.} 
Given that there exist no {federated} matrix factorization algorithms tailored to binary data, we compare our approaches to {local} BMF methods, whose outcomes are then partially transmitted to a central location and collectively aggregated, 
following established ad-hoc federation strategies~\citep{kamp2019black}.
In particular, we adapt the localized algorithms, covering the state of the art in the method families (1)~\emph{cover-based Boolean matrix factorizations} (\asso,~\citet{Miettinen:2008:Discrete}; \grecond,~\citet{Belohlavek:2010:Discovery}; \mebf,~\citet{Wan:2020:Fast}) 
and (2)~\emph{relaxation-based binary matrix factorizations} (\zhang, \citet{Zhang:2007:Binary}; and \elb,~\citet{dalleiger2022efficiently}),
to factorize \emph{distributed} matrices---%
factorizing locally and aggregating the coefficient matrices centrally, replacing the local coefficients.
Leveraging the following aggregations, we summarize the BMF federation scheme in Apx.~\ref{sec:apx:competitors} Alg.~\ref{alg:aggregated_bmf}.
To ensure binary results, we employ three \emph{aggregation strategies} designed to maintain valid matrices
\begin{center}
\begin{tabular}{ccc}
    Rounded Average~~\refstepcounter{equation}(\theequation)\label{eq:aggmean} &
    Majority Vote~~\refstepcounter{equation}(\theequation)\label{eq:aggvote}   &
    Logical \texttt{Or}~~\refstepcounter{equation}(\theequation)\label{eq:aggor}      \\
    \(
        \left\lfloor C^{-1} \sum_{c \in [C]} V^c \right\rceil 
    \) &
    \(
        \left[\sum_{c \in [C]} V^c_{ij} \geq C/2\right]_{ij}
    \)&
    \(
        \bigvee V^1, \dots, V^C \;.
    \)
\end{tabular}
\end{center}

We now describe our diverse set of experimental setups.
First, we ascertain that \felb works reliably on synthetic data.
Second, we empirically assess the differential-privacy properties of \felb.
And third, we verify that \felb performs well on diverse real-world datasets drawn from four different scientific areas.
To quantify the results, we report the \emph{root mean squared deviation} (RMSD) and the F$_1$ \emph{score} between data and reconstruction,
as well as the runtime in seconds.

We implement \felb in the Julia language and run experiments on 32 CPU Cores of an AMD EPYC 7702 or one NVIDIA A40 GPU, reporting wall-clock time in seconds.
We provide the source code, datasets, synthetic dataset generator, and other information needed for reproducibility.
\!\footnote{Anonymized Repository: \oururl\ and Appendix \ref{sec:reproducibility}: Reproducibility}
In all experiments, we limit each algorithm run to 12h in total.
We quantify the performance of \emph{federated} \asso, \grecond, \elb, \mebf, \felb, and \felbmu in terms of loss, recall, similarity, and runtime, reporting results for \emph{majority voting} in the following, as it has superior performance to \emph{rounded averaging} and \emph{logical}, as shown in Apx.~\ref{sec:apx:aggregations}.

\subsection{Experiments on Synthetic Data}
In our experiments on synthetic data, we aim to answer the following questions:
\begin{itemize}[noitemsep,topsep=0pt,parsep=0pt,partopsep=0pt]
    \item[\bf Q1] How robust are the algorithms in the context of noise?
    \item[\bf Q2] How scalable are the algorithms with increasing client counts?
    \item[\bf Q3] How do the algorithms perform under differential privacy?
\end{itemize}
To answer these questions, we need a controlled test environment, which we construct by sampling random binomial-noise matrices into which we insert random densely populated`tiles' containing approximately $90\%$ with $1$s.
To highlight trends rather than random fluctuations, we report the mean and confidence intervals of $10$ randomized trials.  

\subsubsection{Robustness regarding Noise}\label{sec:noise}
To study the impact of noise on the quality of reconstructions, 
we generated synthetic matrices, introducing varying degrees of destructive XOR noise, ranging from $0\%$ (no noise, consisting solely of high-density tiles) to a maximum of $50\%$ (completely random noise). 
Employing a fixed number of $10$ clients, we applied federated \asso, \grecond, \mebf, \elb, and \zhang, alongside \felb and \felbmu to each dataset. 
We present RMSD, F$_1$ score (re signal and noise data), F$^\ast_1$ score (re signal), and runtime in Fig.~\ref{fig:noise}.

In Fig.~\ref{fig:noise}, we see that \felb and \felbmu achieve the best reconstructions across the board even at high noise levels.  
While the noise increases, the reconstruction quality declines across the board. 
We see that either RMSD and F$_1$ follows a similar trend across all method, while our methods consistly outperform the rest.
However, if we regard only the interesting data signal with F$^\ast_1$, we see that \felb and \felbmu are the only algorithms that still result in good reconstructions of the ground-truth signal even if the signal is hidden in high noise.
This shows the ability of \felb and \felbmu to discern pure noise from meaningful signal.
While the runtime of \asso, \grecond, \mebf, \zhang, and \elbmf is slightly faster in Fig.~\ref{fig:noise} (right), \felbmu's and \felb's runtime reduces with increasing noise levels. 

\begin{figure}[t]
    \centering
    \includegraphics[width=\linewidth]{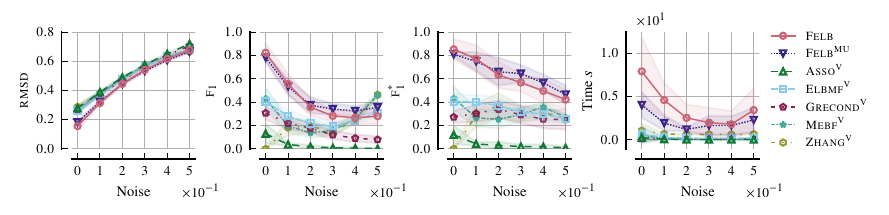}
    \caption{\felb and \felbmu are robust against noise.
    We show the loss, recall, similarity, and elapsed runtime ($s/C$) 
    for synthetic data with varying levels of destructive XOR noise.}
    \label{fig:noise}
\end{figure}

\subsubsection{Scalability regarding Clients}\label{sec:scalability}
Next, we analyze the scalability of federated \asso, \grecond, \elb, \mebf, and \zhang under \emph{majority voting}, as well as of \felb and \felbmu, for varying numbers of clients, considering two contrasting scenarios of \emph{scarce} and \emph{abundant} data. 
In both cases, we generate and uniformly distribute synthetic data to a number of clients, depicting results in Fig.~\ref{fig:scalability}.

To create \emph{data scarcity}, we fix the dataset size to $2^{16}$ and increase the number of clients from $2^2$ to $2^9$, thus iteratively reducing the sample count per client.
In Fig.~\ref{fig:scalability} (left), we observe that our methods scale well to low-sample scenarios and deliver the best performance. 
The MU update rule outperforms the competitors. 
The runtime of post-hoc federated methods \asso, \grecond, \mebf, \zhang, and \elb is lower since they only perform a single optimization epoch. 
These methods slightly outperform \felb and \felbmu only in tiny data scenarios where the estimator-variance is high, while the \felbmu significantly outperforms all methods and is notably faster than \felb.

To evaluate under \emph{data abundance}, we scale the number of samples by increasing the number of clients from $2^2$ to $2^9$, maintaining a constant sample count of $500$ per client. 
In Fig.~\ref{fig:scalability} (right), we observe that our methods scale well with an increased number of clients. With more data, \felb using Lipschitz steps slightly outperforms the MU steps in RMSD, and both methods exhibit comparable runtime trends. The runtime of post-hoc federated methods \asso, \grecond, \mebf, \zhang, and \elb remains lower, as they compute only one local optimization epoch.

\begin{figure}[tb]
    \includegraphics[width=\linewidth]{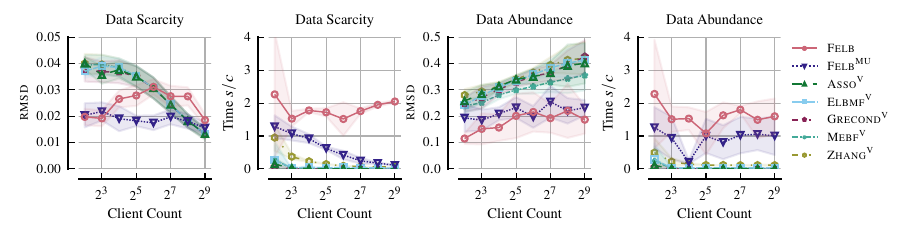}
    \caption{\felb and \felbmu perform well across various client counts, showing RMSD and runtime ($s/C$).
        For \emph{data scarcity}, we fix the data size and an increase number of clients.
        For \emph{data abundance} we grow data while increasing the number of clients.}
    \label{fig:scalability}
\end{figure}

\subsubsection{Performance under Privacy}\label{sec:exp:dp}
To empirically ascertain the effect of differential-privacy guarantees on the loss, we add noise to the transmitted factor matrices according to various noise mechanisms.
Specifically, we study the effect on algorithms subjected to additive clipped or regular {Laplacian} and {Gaussian}, as well as xor Bernoulli noise mechanisms, as depicted in Fig.~\ref{fig:dp} and Apx.~\ref{sec:apx:exp:privacy}, for varying $0 \leq \epsilon \leq 2$ and fixed $\delta = 0.05$.
Because \asso, \mebf, \grecond, \zhang, and \elbmf return Booelean matrices, we subject these only to xor noise, rather than additive noise, to retain Boolean matrices.
The results in Fig.~\ref{fig:dp}, show that both \felb and \felbmu exhibit similar performance across various noise models, while \felbmu is most robust. 
The plots display three phases:
In the low-$\epsilon$ domain, there is almost no performance deterioration, followed by a steep, hockey-stick-like descent which eventually stabilizes in the high-$\epsilon$ range. 
We note an increasing `sharpness' of the hockey-stick-phase under clipping, showing less smooth reactions to privacy adjustments for both mechanisms.

\begin{figure}[tb]
    \centering
    \includegraphics{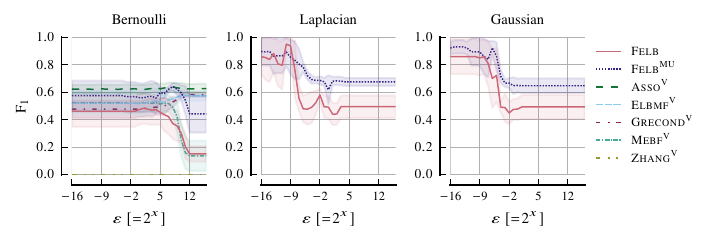}
    \caption{\felb and \felbmu achieve accurate yet differentially-private reconstructions. 
    For synthetic data, we subject algorithms to different noise mechanisms: Bernoulli, Laplacian, and Gaussian noise.}
    \label{fig:dp}
\end{figure}

\subsection{Experiments on Real-World Data}
Having established the efficiency and precision of our method in factorizing synthetic data, we proceed to assess its effectiveness in handling real-world datasets. For this purpose, we curated a diverse selection of 8 real-world datasets spanning four distinct domains.
To explore \textbf{recommendation systems}, we include \emph{Goodreads}\citep{Kotkov:2022:Goodreads} for books and \emph{Movielens}\citep{Harper:2016:Movielens} and \emph{Netflix}~\citep{NetflixPrize} for movies, where user ratings $\geq 3.5$ are binarized to $1$.
In \textbf{life sciences}, we use \emph{TCGA}\citep{TCGA} for cancer genomics, \emph{HPA}\citep{Bakken:2021:ComparativeCA,Sjöstedt:2020:hpa} for single-cell proteomics, and \emph{Genomics}~\citep{1kgc:2015:AGR} for mutation data. \emph{TCGA} marks gene expressions in the top 95\% quantile as $1$, while \emph{HPA} designates observed RNA in cells as $1$.
For \textbf{social science}, we analyze poverty \emph{(Pov)} and income \emph{(Inc)} using the \emph{ACS}~\citep{Census:2023} dataset, binarizing with one-hot encoding utilizing Folktables~\citep{ding:2021:retiring}.
In \textbf{natural language processing}, we study higher-order word co-occurrences in ArXiv cs.LG abstracts~\citep{Arxiv:2023}. 
Each paper abstract is a row with columns marked $1$ if the corresponding word is in the vocabulary, containing lemmatized, stop-word-free words with a minimum frequency of $1$ \textpertenthousand\ .
We summarize dataset extents, density, and chosen component counts in Apx.~\ref{sec:reproducibility}, Tbl.~\ref{tbl:data}.
For a fixed client count of $C=50$, we experimentally compare federated methods \asso, \grecond, \mebf, \elb, and \zhang, as well as \felb, and \felbmu across all real-world datasets, synchronizing after every $b=10$ local optimization rounds. 

In Tbl.~\ref{tab:realworld}, we present the F$_1$ between the reconstruction and the data matrix, where {}--{} indicate missing data due to time limits.
Our results see that \felb and \felbmu exhibit best-in-class performance, consistently ranking as the best or second-best algorithms. 
This performance gap is evident across all datasets except for the \emph{HPA} dataset, where \mebf, a method designed for similar data types, outperforms the others, and the \emph{ACS Pov} dataset, where \grecond leads. 
Notably, since clients of \elbmf and \zhang diverge significantly, they often aggregate into a no-consensus $0$-only global model matrix, thus showing low accuracy. 
Although they perform only a single optimization epoch per client, we see that \asso, \grecond, and \mebf do not finish on medium to large real-world datasets.
Additionally, we show the RMSD in Apx.~\ref{sec:apx:exp:realworld_rmsd}, where \felb and \felbmu are on top in RMSD, and compare client-server communication frequencies in Apx.~\ref{sec:apx:exp:realworld}, demonstrating the strength of \felb and resp.\ \felbmu.

\begin{table}
    \caption{\felb and \felbmu consistently achieve top performances.
    We illustrate the F$_1$ of \asso, \grecond, \mebf, \elb, and \zhang under voting aggregation, as well as federated \felb, and \felbmu
    on $8$ real-world data across $50$ clients.
    We highlight the best algorithm with \textbf{bold}, the second best with \underline{underline}, and indicate missing data due to timeouts by a dash {}--{}.}
    \label{tab:realworld}
    \centering
    \small
    \vspace*{0.5\baselineskip}
    \begin{tabular}{lrrrrrrr}
\toprule
Dataset & {\sc Asso$^\textsc{v}$} & {\sc Mebf$^\textsc{v}$} & {\sc Grecond$^\textsc{v}$} & {\sc Zhang$^\textsc{v}$} & {\sc Elbmf$^\textsc{v}$} & {\sc Felb$^{\textsc{mu}}$} & {\sc Felb} \\
\midrule
ACS Inc & 0.388 & 0.108 & \bfseries 0.690 & 0.000 & 0.000 & \underline{0.585} & 0.328 \\
ACS Pov & \underline{0.692} & -- & \bfseries 0.797 & 0.000 & 0.217 & 0.638 & 0.517 \\
cs.LG & -- & 0.000 & \bfseries 0.068 & 0.000 & 0.000 & \underline{0.057} & 0.006 \\
Goodreads & -- & 0.000 & 0.017 & -- & 0.000 & \bfseries 0.125 & \underline{0.059} \\
HPA Brain & -- & \bfseries 0.642 & -- & 0.000 & 0.000 & \underline{0.007} & 0.000 \\
Movielens & -- & 0.017 & -- & -- & 0.000 & \bfseries 0.193 & \underline{0.163} \\
Netflix & -- & 0.010 & -- & -- & 0.000 & \bfseries 0.197 & \underline{0.144} \\
TCGA & 0.039 & 0.055 & 0.007 & 0.000 & 0.000 & \bfseries 0.414 & \underline{0.402} \\
\midrule\textit{Avg. Rank} & 4.750 & 3.75 & 3.375 & 5.125 & 4.500 & \bfseries 1.625 & \underline{2.750} \\
\bottomrule
\end{tabular}

\end{table}

\section{Discussion and Conclusion}
\label{sec:discussion}

We introduced the federated proximal-gradient-based \felb for BMF tasks,
showed its convergence to a binary outcome in theory,
and demonstrated its efficacy in experimental practice. 
We provided a variant called \felbmu, whose practical performance outcompetes \felb on many real-world datasets, especially under rare synchronizations.
Although \felb and \felbmu perform consistently well, both are first-of-their-kind federated BMF algorithms. 
As such, they leave ample room for further research. \\
\textbf{Limitations}~%
Our research focuses on learning from private Boolean data generated by similar sources at a few research centers, thus we concentrate on suitable experiments and abstain from distant but related problems, such as learning with millions of heterogeneous clients. Further, we experimentally demonstrate the practical limitations of our methods extensively. We provide a more detailed discussion of limitations in Apx.~\ref{sec:limitations}.\\
In \textbf{future work}, we aim to extend our approaches to allow for heterogeneous clients and data distributions, adapting our methods to learn from varied data distributions and characteristics. 
Additionally, we plan to explore large-scale federations, drawing inspiration from frameworks like Scaffold~\cite{karimireddy2020scaffold} and FedProx~\cite{Li:2020:FedProx} for efficient client sampling and variance controlling. 
Furthermore, we intend to investigate personalized federated learning techniques to improve the reconstructions in case of varied data sources.
Finally, we plan to move beyond Boolean data and seek explore the potential of allowing partial sharing of a subset of the client components $V_i$ to allow for multi-source multi-modal federated learning to improve model performance and generality.

\vfill\pagebreak

\bibliographystyle{plainnat}

\appendix
\clearpage
\section*{Supplementary Material}

In this Appendix, we provide supplementary information  
\begin{itemize}[noitemsep,topsep=0pt,parsep=0pt,partopsep=0pt]
    \item regarding the convergece in Apx.~\ref{sec:apx:convergence},
    \item regarding the federation of baseline BMF methods in Apx.~\ref{sec:apx:competitors},
    \item regarding dataset used in our experiments Apx.~\ref{sec:datasets},
    \item regarding reproducibility of our experiments in Apx.~\ref{sec:reproducibility},
    \item regarding limitations in Apx.~\ref{sec:limitations}.
\end{itemize}  
Furthermore, we provide additional experimental results regarding
\begin{itemize}[noitemsep,topsep=0pt,parsep=0pt,partopsep=0pt]
    \item post-hoc aggregations in Apx.~\ref{sec:apx:aggregations},
    \item empirical convergence in Apx.~\ref{sec:apx:exp:convergece},
    \item client drift in Apx.~\ref{sec:apx:exp:drift} 
    \item differential privacy in Apx.~\ref{sec:apx:exp:privacy}, and
    \item additional real-world performance evaluations in Apx.~\ref{sec:apx:exp:realworld}. 
\end{itemize}

\section{Convergence}
\label{sec:apx:convergence}
In this section, we establish the convergence properties of Algorithm \ref{alg:feddc}.
We begin by showing in Theorem \ref{thm:global_convergence} that the objective function of the algorithm converges to a stable solution in the limit.
To this end, we leverage the local convergence of each client, as proven with Lem.~\ref{thm:local_convergence} (Apx.~\ref{appdx:convergingclients}), to demonstrate a sufficient reduction in the \emph{global} objective function values.
By combining these results, we establish the global convergence of the objective function.
Building upon this, we moreover prove in Theorem \ref{thm:bool_convergence} that the algorithm converges to Boolean matrices.
We establish conditions under which the sequences of matrices converge to binary solutions, demonstrating that both the gradient and proximal operator converge to binary solutions, thereby ensuring the stability of Boolean solutions at both the global and local levels.
The outline of our proof is as follows.
\begin{enumerate}
    \item We show the convergence of Alg.~\ref{alg:feddc} in Thm.~\ref{thm:global_convergence}.
    \item We show that Alg.~\ref{alg:feddc} converges to Boolean matrices with Thm.~\ref{thm:bool_convergence}.
    \item We show the convergence of each client in Alg.~\ref{alg:feddc} to a stable solution with Lem.~\ref{thm:local_convergence}.
\end{enumerate}

\begin{theorem}[Convergence of Alg.~\ref{alg:feddc} (restated)]
    \label{thm:global_convergence}
    For the sequence generated by Alg.~\ref{alg:feddc} \(\{z^t \defeq (\{U^t_i\}_i, \{V^t_i\}_i, \bar{V}^t) \}_{k\in\NN}\),
    the objective function $\Phi(z^t)$
    converges to a stable solution $\Phi(z^t) \to \widehat{\Phi}$ if $t \to \infty$.
\end{theorem}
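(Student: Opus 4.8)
The plan is to prove convergence of the objective \emph{values}, not of the iterates, so the full Kurdyka--\L{}ojasiewicz machinery of \citet{Attouch:2013:Convergence,Bolte:2014:Proximal} is unnecessary; it suffices to exhibit $\Phi(z^t)$ as a sequence that is bounded below and whose increments are dominated by a summable perturbation. First I would note that $\Phi\ge 0$ and is coercive in each block (the \textsc{Elb}-regularizer \eqref{eq:elb_reg} grows quadratically), so the iterates stay in a bounded set $\mathcal{B}$ on which the loss $f_i(U,V)=\normfs{A_i-UV}$, the regularizer $R$, and the penalty $P$ are Lipschitz with constants I can fix. I would then split every communication window into its $b-1$ purely \emph{local} rounds, during which $\Vb$ is held fixed, and the single \emph{synchronization} round, during which the server recomputes $\Vb$ and each client executes the reset $V_i\gets\Vb$.

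For the local rounds, with $\Vb$ fixed the updates in \eqref{eq:update} are iPALM proximal-gradient steps on $\normfs{A_i-U_iV_i}+R(U_i)+R(V_i)+P(V_i)$, whose smooth part has block-Lipschitz gradient with constants $L_{U_i}=2\|V_iV_i^\top\|$ and $L_{V_i}=2\|U_i^\top U_i\|$ (the smooth penalty $P$ is folded in, its $\prox^p$ being an affine contraction toward $\Vb$). With $\eta=1/L$ for \felb or the multiplicative step sizes for \felbmu, the descent lemma applies and yields the sufficient-decrease inequality $\Phi_i(z^{t+1})\le\Phi_i(z^t)-\rho_i\big(\normfs{U_i^{t+1}-U_i^t}+\normfs{V_i^{t+1}-V_i^t}\big)$ together with a uniform bound on the dissimilarity $\normf{V_i^t-\Vb^t}$; this is precisely Lemma~\ref{thm:local_convergence}, which I would invoke. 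For the server step, I would observe that $\Vb\gets\prox^a(\tfrac1C\sum_iV_i)$ is exactly the block-minimizer of the $\Vb$-dependent part $\sum_i\gamma\normfs{V_i-\Vb}+R(\Vb)$ of $\Phi$ (for the matching step size), hence does \emph{not} increase the objective. Consequently the only source of non-monotonicity is the reset.

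It remains to control the reset $V_i\gets\Vb$. On $\mathcal{B}$, Lipschitz continuity of $\normfs{A_i-U_i\,\cdot}+R(\cdot)$ in the $V_i$ block, combined with the favorable $-\gamma\normfs{V_i-\Vb}$ supplied by the vanished penalty, bounds the per-reset increase by $b_t\le (L'+L_R)\,d_t-\gamma\,d_t^2$ in terms of the dissimilarity $d_t=\normf{V_i^t-\Vb^t}$. This is uniformly bounded but not obviously summable, which is the heart of the matter. I would therefore cast the reset as a controlled error in an \emph{inexact} iPALM iteration and argue that the proximity penalty, via the contraction of $\prox^p$ toward $\Vb$ and the freshly-enforced agreement $d=0$ at each synchronization, makes $d_t$ comparable to the inter-iterate movement $\normf{z^{t+1}-z^t}$; this relative-error control lets the reset be absorbed into the sufficient-decrease estimate, so that $\sum_t b_t<\infty$ and $d_t\to 0$. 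With $\Phi(z^{t+1})\le\Phi(z^t)-a_t+b_t$, $a_t\ge 0$ and $\sum_t b_t<\infty$, the shifted potential $W_t\defeq\Phi(z^t)+\sum_{s\ge t}b_s$ is non-increasing and bounded below, hence converges; since its tail $\sum_{s\ge t}b_s\to 0$, I conclude $\Phi(z^t)\to\widehat{\Phi}$.

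The step I expect to be the main obstacle is establishing the summability (or at least the vanishing) of the reset perturbations $b_t$: unlike a textbook PALM proof, the consensus reset $V_i\gets\Vb$ can genuinely raise the objective, and showing that these increases die out---rather than merely stay bounded---is what forces the relative-error analysis and is where the proximity penalty $P$ does the essential work.
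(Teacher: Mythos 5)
The crux you flag is a genuine gap, and the fix you sketch does not close it. At a reset, your own perturbation bound is $b_t \le (L'+L_R)\,d_t - \gamma d_t^2$, which is \emph{linear} in the dissimilarity $d_t$ to leading order. The relative-error mechanism you invoke---agreement $d=0$ at every synchronization plus the contraction of $\prox^p$ toward $\Vb$---can at best give $d_t \lesssim \sum_{s}\normf{z_i^{s+1}-z_i^s}$ over one communication window, while the sufficient-decrease inequality controls only the sum of \emph{squares} of these movements. Square-summability of $d_t$ does not imply $\sum_t d_t < \infty$ (take $d_t \sim 1/t$), so $\sum_t b_t < \infty$ does not follow, and the shifted potential $W_t \defeq \Phi(z^t) + \sum_{s \ge t} b_s$ need not even be well defined. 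There is also a circularity: the summable-squared-movement estimate is itself derived from a descent inequality that presupposes the resets are already controlled, since $\Phi$ is not monotone across them. The standard tool that upgrades square-summable increments to summable ones is precisely the K\L{} desingularization you chose to discard; without it, or an additional scheduling assumption (e.g.\ geometric decay of $d_t$ forced through $\gamma$ or the $\lambda^t$ schedule) that Alg.~\ref{alg:feddc} as stated does not provide, the perturbed-descent argument stalls exactly where you predicted it would.

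For contrast, the paper takes a different and much blunter route: it proves client-level convergence (Lem.~\ref{thm:local_convergence}) through \citet{Attouch:2013:Convergence}'s framework---sufficient decrease (Lem.~\ref{lem:sufficient_decrease}), subgradient lower bound (Lem.~\ref{lem:bounded_grads}), and the K\L{} property (Lem.~\ref{lem:klp})---and then obtains the theorem by summing the per-client sufficient-decrease inequalities over $i$, treating the dissimilarity to $\Vb$ only as \emph{bounded}. It never isolates the synchronization step at all. Your decomposition is sharper in two respects that deserve credit: the observation that the server aggregation $\prox^a$ is an exact block minimization of the $\Vb$-dependent part of $\Phi$ (hence non-increasing), and the identification of the reset $V_i \gets \Vb$ as the sole source of non-monotonicity; neither point is made explicit in the paper. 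But this honesty creates an obligation that the paper's proof implicitly sweeps under the sufficient-decrease inequality and that your proposal, as written, cannot discharge.
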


\begin{proof}
    To show that the objective convergence to a stable solution $\Phi(z^t) \to \Phi^\ast$ when $t \to \infty$,
    we first show that each client convergence in Lem.~\ref{thm:local_convergence},
    where we observe a sufficient reduction in $\Phi_i(z_i^{t+1}) \leq \Phi_i(z_i^{t+1}) - \rho_i \normfs{z_i^{t+1} - z^{t}}$ for some constant $\rho_i$.
    Using this property we can show the global convergence as follows.
    \begin{align*}
        \Phi(z^{t+1}) & = \sum_i \Phi_i(z_i^{t+1})                                                    \\
                      & \leq \sum_i \Phi_i(z_i^{t}) - \rho_i \normfs{\nabla_i \Phi_{z_i^{t}}(z_i^{t})} \\
                      & \leq \Phi(z^{t}) - \sum_i \rho_i \normfs{z_i^{t+1} - z^{t}}                    \\
                      & \leq \Phi(z^{t}) - \rho \sum_i \normfs{z_i^{t+1} - z^{t}}
    \end{align*}
    Moreover, from Lem.~\ref{thm:local_convergence} we deduce that
    $ \normfs{z_i^{t+1} - z_i^{t}} \to 0$ if $t \to \infty$.
    Therefore, the global loss converges, $\Phi(z^{t+1}) \to \widehat{\Phi}$ to some constant $\widehat{\Phi}$
\end{proof}

So far, we only know that our algorithm generates a convergent sequence.
It remains to show that the sequence converges to a Boolean solution, which follows in Thm.~\ref{thm:bool_convergence}.

\begin{theorem}[Boolean Convergence of Alg.~\ref{alg:feddc} (restated)]
    \label{thm:bool_convergence}
    If $\lambda^t$ is a monotonically increasing sequence with $\lambda^{t-1} \leq \lambda^{t}$, $\lim \lambda^t \to \infty$, and $\lambda^{t} - \lambda^{t-1} \leq \infty$,
    then $V_1^T, \cdots, V_c^T$ and $\Vb^T$ from the sequence generated by Alg.~\ref{alg:feddc} convergences as $\lim_{T \to \infty} \operatorname{dist}(\Vb^T, \{0,1\}) \to 0$ to a Boolean matrix.
\end{theorem}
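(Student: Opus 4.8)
The plan is to reduce the statement to the limiting behaviour of the scaled proximal operators $\prox^r$ and $\prox^a$ as the rate $\lambda^t$ diverges, invoking Thm.~\ref{thm:global_convergence} only to control the quantities fed into them. First I would extract from Thm.~\ref{thm:global_convergence} and Lem.~\ref{thm:local_convergence} the two ingredients the argument needs: the iterates $z^t$ stay in a fixed compact set, so every matrix passed to a proximal step has uniformly bounded entries; and the successive differences obey $\normfs{z_i^{t+1}-z_i^t}\to 0$, which with the Lipschitz continuity of the least-squares gradients makes the gradient perturbation inside each proximal step vanish. The role of the increment hypothesis on $\lambda^t-\lambda^{t-1}$ is to keep the time-varying regulariser $R$ compatible with this machinery, bounding the change of the objective between consecutive rounds so that the sufficient-decrease argument behind Thm.~\ref{thm:global_convergence} still applies while $\lambda^t$ grows.

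The core is a pointwise computation on the closed form \eqref{eq:elb_prox}. Substituting $\kappa\mapsto\eta\kappa$ and $\lambda\mapsto\eta\lambda^t$ with a step size $\eta$ bounded below by some $\eta_{\min}>0$, I would prove that for any inputs $x^t$ confined to a bounded interval, $\operatorname{dist}\bigl(\prox^r_{\eta\kappa,\eta\lambda^t}(x^t),\{0,1\}\bigr)\to 0$: the prefactor $(1+\eta\lambda^t)^{-1}\to 0$ annihilates the term $x^t-\eta\kappa\,\sign(\cdot)$ in both branches, so on $x^t\le\tfrac12$ the output tends to $0$, whereas on $x^t>\tfrac12$ the surviving additive term gives $(1+\eta\lambda^t)^{-1}\eta\lambda^t=1-(1+\eta\lambda^t)^{-1}\to 1$. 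Hence both operators collapse to rounding at threshold $\tfrac12$, uniformly over bounded inputs. Applied to the server step $\Vb^T=\prox^a_{\eta\kappa,\eta\lambda^T}\bigl(\tfrac1C\sum_i V_i^T\bigr)$, whose argument is bounded by the first paragraph, this yields $\operatorname{dist}(\Vb^T,\{0,1\})\to 0$ at once, independently of the communication period $b$.

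Transferring the Boolean limit to the local factors $V_i^T$ is where I expect the main obstacle. At each synchronisation the clients reset $V_i\gets\Vb$, so there the claim follows from that for $\Vb$; the difficulty is the intermediate rounds, in which $V_i$ is updated by a $\prox^r$ step followed by the averaging step $\prox^p_{\eta\gamma}(X)=(1+\eta\gamma)^{-1}(X+\eta\gamma\Vb)$ of \eqref{eq:prox_prox}. This average is delicate, as a convex combination of a Boolean matrix with $\Vb$ is Boolean only when the two agree entrywise. I would close the gap by establishing limiting consensus $\normf{V_i^t-\Vb^t}\to 0$: since only the $b$ local steps of a window separate $V_i^t$ from the value $\Vb$ it was reset to, and each per-step change $\normf{V_i^{s+1}-V_i^s}$ vanishes by Thm.~\ref{thm:global_convergence}, the drift over a window is $o(1)$, while $\Vb^t$ is piecewise constant and itself stabilises. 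Once $V_i$ and $\Vb$ share the same Boolean limit, $\prox^p$ acts as the identity in the limit and a $\prox^r$ step applied to a Boolean matrix perturbed by a vanishing gradient returns a Boolean matrix, because $0$ and $1$ are fixed points of $\prox^r$ for every $\lambda$ and the $\tfrac12$ threshold is crossed only by a perturbation tending to $0$. Combining the Boolean limit of $\prox^r$ and $\prox^a$ with the vanishing consensus gap then gives $\lim_{T\to\infty}\operatorname{dist}(\Vb^T,\{0,1\})\to 0$, as claimed.
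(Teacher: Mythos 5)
Your proposal is correct and takes essentially the same route as the paper's proof: both use the convergence results (Thm.~\ref{thm:global_convergence}, Lem.~\ref{thm:local_convergence}) to obtain bounded iterates and vanishing gradient perturbations, perform the identical case-distinction limit on the closed form \eqref{eq:elb_prox} showing that $\prox^r$ collapses to rounding at threshold $\tfrac12$ as $\lambda^t\to\infty$, and finish with a stability argument that Boolean iterates remain Boolean once gradient steps are smaller than $\tfrac12$. The only notable difference is that you explicitly treat the $\prox^p$ averaging step via a consensus bound $\normf{V_i^t-\Vb^t}\to 0$ over each synchronization window, a point the paper's proof handles only implicitly by assuming the client has already received a binary aggregate $\Vb$ and noting that the fixed points then propagate.
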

\begin{proof}
    In each update round, the client $i$ performs the proximal alternating linear minimization steps laid out in Eq.~\ref{eq:update}, yielding an updated $V_i^t$ (resp. $U_i^t$). %
    Focusing on $V_i$ (independent of client-server communication), we first show that the gradient of $V_i$ goes to zero.
    As shown by Thm.~\ref{thm:global_convergence} and Lem.~\ref{thm:local_convergence},
    our sequence of alternating linear optimization steps followed by scaled proximal steps convergence.
    Note that our gradients are bounded and are Lipschitz continuous.
    Because we scale our proximal operators with respect the Lipschitz moduli of the respective gradients,
    notably prevent the proximal operator and gradient steps from alternatingly between $0$ and $1$, thus creating a convergent sequence to a stable solution.
    We need to verify that the proximal operator projects to binary solutions, i.e.,
    $\lim_{\lambda^t\rightarrow\infty}\prox(x) \in \{0, 1\}$ for $\lambda^t \rightarrow \infty$.
    We do this with a case distinction:
    For $x \leq 0.5$, we obtain
    \(
    \lim (x-\kappa\sign(x)) (1+\lambda^t)^{-1} = 0 \;
    \),
    and analogously for $x > 0.5$, we obtain
    \(
    \lim (x-\kappa\sign(x-1)+\lambda^t) (1+\lambda^t)^{-1} = 1 \;,
    \)
    thus having ensured a binary proximity,
    for $\lambda^t \rightarrow \infty$ with $\lambda^{t} \leq \lambda^{t+1}$ and $\lambda^{t+1} - \lambda^{t} \leq \infty$, any bounded $x$, and finite $\kappa \in \RR_+$.
    Therefore, independent of communication rounds, the gradient converges to $0$ and the proximal operator converges to a binary solution.
    It remains to show that for $t\rightarrow\infty$, a binary solution stays stable, meaning that a global binary solution implies local convergence.
    By assuming that a client in round $t$ receives a binary aggregate $\widehat{V}$ from the server, we obtain $\|\eta\nabla_V \|A_i-U_i^{t-1}V_i\|_{max}\|\leq \epsilon$ for $\epsilon<\sfrac12$.
    By abbreviating the gradient-step result
    \[
        V' = V_i^{t-1} - \eta \nabla_{V_i} \normf{A_i-U_i^{t-1}V_i}^2
    \]
    we see that $V'_{pq}<\sfrac12$ if $[V_i^{t-1}]_{pq}=0$, and $V'_{pq}>\sfrac12$ if $[V_i^{t-1}]_{pq}=1$, which implies that $\prox_{\lambda^t\kappa}(V')$ is binary and $V_i^t=V_i^{t-1}$.
    Moreover, repeating these steps for $\Vb^t$, we obtain boolean aggregates upon convergence.
\end{proof}

\subsection{Converging Clients}
\label{appdx:convergingclients}
In this part, we demonstrate the convergence of each client in Algorithm \ref{alg:feddc}.
Specifically, we show that the decrease between client iterations is sufficiently large, while ensuring convergence to stable solutions.
To achieve this, we employ a series of lemmas:
In Lemma \ref{thm:local_convergence}, we establish that the sequence generated by each client converges both in terms of objective function value and to a critical point of the objective function.
We further provide that the difference of the sequence under finite length conditions is bounded.
Subsequently, Lemma \ref{lem:bounded_grads} ensures that gradients of the objective function are limited, thereby remain within a certain proximity to the current point.
In Lemma \ref{lem:sufficient_decrease}, we establish a sufficient decrease property, ensuring that the objective function decreases at each iteration by a certain amount.
By combining these lemmas, we demonstrate the convergence of each client in the algorithm, enabling the global convergence proof in Thm.~\ref{thm:global_convergence}.
In summary, our sub goals are as follows:
\begin{enumerate}
    \item We aim to demonstrate the convergence of each client.
    \item We establish that the decrease between client iterations is sufficiently large.
    \item To achieve this, we initially bound all subdifferentials for each client-block, as outlined in Lem.~\ref{lem:bounded_grads}.
    \item Subsequently, we utilize this information to bound the gain.
\end{enumerate}

\begin{lemma}[Convergence of client $i$ in Alg.~\ref{alg:feddc}] 
    Let $\{z^t_i \defeq (U^t_i,V^t)_i\}_{k\in\NN}$ be the sequence generated by a client $i$ in Alg.~\ref{alg:feddc},
    then
    \begin{enumerate}
        \item the client objective $\{\Phi_i(z^t_i)\}_{k}$ converges to $\Phi_i^\ast$, and
        \item the sequence $\{z^t_i\}_k$ converges to a critical point of $\Phi_i(z^\ast_i)$,
    \end{enumerate}
    for $t \to \infty$, assuming that $\Phi_i$ is continuous on $\dom \Phi_i$.
    Furthermore, if a \emph{subsequence} $z^t_i$ starts from the shared coefficients $\Vb$, i.e., $V_i^1 \equiv \Vb$,
    then the difference $\normf{V^t_i - \Vb}$ between $V^t_i$ and $\Vb$ is bounded by a finite constant $\rho$ for $t \to T$.
    \label{thm:local_convergence}
\end{lemma}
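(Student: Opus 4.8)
The plan is to establish the three claims of Lemma~\ref{thm:local_convergence} by invoking the iPALM convergence machinery of \citet{Pock:2016:Inertial,Bolte:2014:Proximal,Attouch:2013:Convergence} and then specializing it to the structure of our per-client objective $\Phi_i(U_i,V_i) = \normf{A_i - U_iV_i}^2 + R(U_i) + R(V_i) + P(V_i)$. First I would verify that $\Phi_i$ satisfies the structural hypotheses required for those convergence results: the coupling term $\normf{A_i - U_iV_i}^2$ is smooth with a block-Lipschitz gradient (Lipschitz in $U_i$ for fixed $V_i$ and vice versa, with moduli controlled by $\|V_iV_i^\top\|$ resp. $\|U_i^\top U_i\|$), while $R$ and $P$ are proper and lower-semicontinuous. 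The crucial additional property is that $\Phi_i$ is a Kurdyka--Łojasiewicz (KL) function: I would argue this by noting that each constituent term is semialgebraic --- the Frobenius losses are polynomial, $P$ is quadratic, and the \textsc{Elb}-regularizer $R$ in Eq.~\ref{eq:elb_reg} is a pointwise minimum of two elastic-net expressions, hence semialgebraic --- so their sum is semialgebraic and therefore KL. With these hypotheses in place, claims (1) and (2) follow directly from the standard iPALM/PALM convergence theorem: the objective values $\{\Phi_i(z_i^t)\}$ are nonincreasing and converge to some $\Phi_i^\ast$, and the generated sequence $\{z_i^t\}$ converges to a critical point of $\Phi_i$.

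The core work supporting the cited convergence theorem is a \emph{sufficient decrease} property and a \emph{bounded-gradient} (subdifferential) property, which the excerpt flags as Lem.~\ref{lem:sufficient_decrease} and Lem.~\ref{lem:bounded_grads}. I would carry these out in the expected order: first bound the subdifferentials $\partial\Phi_i$ block-wise --- for each block, the optimality condition of the proximal step gives that a subgradient element is an affine image of the gradient-step displacement $z_i^{t+1} - z_i^{t}$, so $\operatorname{dist}(0, \partial\Phi_i(z_i^{t+1})) \leq b\,\normf{z_i^{t+1} - z_i^{t}}$ for a constant $b$ depending on the Lipschitz moduli. Second, I would establish the sufficient-decrease inequality $\Phi_i(z_i^{t+1}) \leq \Phi_i(z_i^{t}) - \rho_i\,\normfs{z_i^{t+1} - z_i^{t}}$ by the familiar descent-lemma argument: the proximal step minimizes the regularizer plus a quadratic majorizer of the loss, so comparing the value at $z_i^{t+1}$ against $z_i^{t}$ and using the strong convexity of the majorizer yields a gain proportional to the squared step length, with $\rho_i$ governed by the chosen step sizes $\eta_{U_i}, \eta_{V_i}$ relative to the Lipschitz constants. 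Summing the decrease inequality over $t$ gives a telescoping bound that forces $\normfs{z_i^{t+1} - z_i^{t}} \to 0$, which is exactly the fact exploited in the proof of Thm.~\ref{thm:global_convergence}.

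For the final assertion --- that when a subsequence is initialized at $V_i^1 \equiv \Vb$, the deviation $\normf{V_i^t - \Vb}$ stays bounded by a finite $\rho$ for $t \to T$ --- I would argue as follows. The proximity penalty $P(V_i) = \gamma\normf{V_i - \Vb}^2$ makes the $V_i$-block objective coercive with its minimizer pulled toward $\Vb$; concretely, the $\prox^p_{\gamma}$ operator in Eq.~\ref{eq:prox_prox} is a contraction toward $\Vb$ (a convex combination $[1+\gamma]^{-1}(X + \gamma\Vb)$), so each $V_i$-update strictly shrinks the component of $V_i$ orthogonal to $\Vb$ by a factor $(1+\gamma)^{-1}$ while the gradient and $\prox^r$ steps move $V_i$ by a bounded amount per iteration (bounded because $A_i$ is binary, $U_i$ is bounded, and the step sizes are finite). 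Combining a per-step bounded displacement with the geometric pull of the penalty, I would show by an elementary telescoping/geometric-series estimate that $\normf{V_i^t - \Vb}$ cannot escape a ball of finite radius $\rho$ over any finite horizon $t \to T$. The main obstacle I anticipate is the KL/semialgebraicity verification for the nonsmooth, nonconvex \textsc{Elb}-regularizer: the pointwise-minimum structure of $R$ makes it neither convex nor differentiable, and one must be careful that the min-of-semialgebraic-functions argument genuinely transfers the KL property to the full objective so that the iPALM theorem applies --- this is the step that does the real load-bearing work, whereas the sufficient-decrease and bounded-subdifferential estimates, though technical, are routine once the majorization structure of the proximal steps is written out.
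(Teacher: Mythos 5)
Your proposal takes essentially the same route as the paper's proof: both reduce the lemma to \citet{Attouch:2013:Convergence}'s convergence framework by verifying block-Lipschitz partial gradients and lower semicontinuity, establishing a sufficient-decrease inequality and a subgradient lower bound from the proximal-step optimality conditions, and obtaining the K\L{} property from semialgebraicity of the Frobenius losses and the \textsc{Elb}-regularizer. Your treatment is in fact slightly more complete, since you also sketch an explicit argument (contraction of $\prox^p_{\gamma}$ toward $\Vb$ combined with bounded per-step displacements) for the final bounded-dissimilarity clause, which the paper's own proof does not address explicitly.
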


Before we proof Lem.~\ref{thm:local_convergence}, we sketch the proof concept as follows.
A problem with block-coordinate methods or Gauss-Seidel approaches lies in showing global convergence for these non-convex problems.
\citet{Attouch:2013:Convergence} demonstrate the convergence of a sequence generated by a generic algorithm to a critical point of a given proper, lower semicontinuous function $\Psi$ (in our case $\Phi_i$) over a Euclidean space $\RR^N$ and establish that the algorithm converges to a critical point of $\Psi$.
Their proof consists of two parts. 
First, they ensure two \emph{fundamental convergence conditions} that are necessary for the convergence of many descent algorithms.
If both are satisfied, they ensure that the set of points of the sequence is nonempty, compact, and connected, with the set being a subset of the critical points of $\Psi$.

\paragraph{Sufficient Decrease Property}
This property ensures that with each iteration, the objective value decreases sufficiently.
Here the aim is to find a positive constant \( \rho_1 \) such that the difference between successive function values decreases sufficiently with each iteration, i.e.,
\[ \rho_1 \left\| z^{t+1} - z^t \right\|^2 \leq \Psi(z^t) - \Psi(z^{t+1}), \quad \forall t = 0, 1, \ldots \]

\paragraph{Subgradient Lower Bound}
This property ensures that the algorithm does not move too far from the current iterate.
Assuming the generated sequence is bounded, we seek another positive constant \( \rho_2 \) such that the norm of the difference between consecutive iterates is bounded by a multiple of the norm of the subgradient of $\Psi$ at the current iterate, i.e.,
\[ \left\| w^{t+1} \right\| \leq \rho_2 \left\| z^{t+1} - z^t \right\|, \quad w^t \in \partial \Psi(z^t), \quad \forall t = 0, 1, \ldots \]
Because we need a certain stability for our Boolean convergence argument, we have to show that we converge to a critical point. 
Second, they show global convergence to a critical point using the KŁ property.

\paragraph{Kurdyka-Łojasiewicz Property}
To establish global convergence to a critical point, they introduce an additional assumption on the class of functions $\Psi$ being minimized, known as the Kurdyka-Łojasiewicz (KŁ) property. 
Intuitively, if this property is satisfied, it prevents the objective to become too flat around a local minimizer, so that the convergence rate would be too low.
It does so by creating a locally-convex/ or simply linear `surrogate' or `gauge' function $g$ that
measures the distance between $z$ and $z^\ast$
\[g(\Psi(z) - \Psi(z^\ast)) \geq \text{dist}(0, \partial \Psi) \]
or more specifically:
\[g(\Psi(z) - \Psi(z^\ast)) \geq \|\partial \Psi\| \]
where, roughly speaking, $z \in \text{Neighborhood}_\eta(z^\ast)$ \cite{Nesterov:2004}.
\citet{Attouch:2013:Convergence} have shown that every bounded sequence generated by the proximal regularized Gauss-Seidel scheme converges to a critical point,
assuming that the objective function satisfies the KL property \cite{Attouch:2013:Convergence}. 
We satisfy this assumption, as our objective is comprised of `semi algebraic' functions.
Now, leveraging the descent property of the algorithm and a uniformization of the KŁ property, they show that the generated sequence is a Cauchy sequence \cite{Attouch:2013:Convergence}, i.e., 
\[ \lim_{l \to \infty} \sum_{t=l}^\infty \left\| z^t - z^{t-1} \right\| \to 0 \;.\]

\begin{proof}(\citet{Attouch:2013:Convergence}).~~%
    Because $\Phi_i$ comprises lower semi-continuous functions on $\dom \Phi_i$, and that all partial gradients are globally Lipschitz,
    all assumptions for the proof are met \cite{Attouch:2013:Convergence}.
    Together with
    (\textbf{\rom{1}}) sufficient decrease property (Lem.~\ref{lem:sufficient_decrease}),
    (\textbf{\rom{2}}) lower-bounded subgradients (Lem.~\ref{lem:bounded_grads}),
    (\textbf{\rom{3}}) the Uniformed K\L~property of (via Lem.~\ref{lem:klp}),
    the convergence lemma follows from the global convergence property in~\citet{Attouch:2013:Convergence}'s proof.
\end{proof}
We now formally proof the three properties, i.e., lower-bounded subgradients, sufficient decrease property, and the uniformed K\L~property.
\begin{lemma}[Lower-bounded Subgradients]
    There is a $\rho$, such that the gradients
    \label{lem:bounded_grads}
    \[
        \operatorname{dist}(0, \partial\Phi_i(z_i^{t+1}))
        \leq \rho \normf{z_i^{t+1} - z_i^t}
    \]
\end{lemma}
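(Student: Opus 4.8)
The plan is to follow the subgradient lower-bound argument that underpins the PALM/iPALM convergence theory of \citet{Bolte:2014:Proximal} and \citet{Pock:2016:Inertial}, specialized to the block structure $\Phi_i(U_i,V_i)=f(U_i,V_i)+R(U_i)+R(V_i)+P(V_i)$ with smooth coupling $f(U,V)=\normfs{A_i-UV}$ and smooth quadratic penalty $P$. The governing observation is that each proximal update in Eq.~\ref{eq:update} encodes a first-order optimality condition that \emph{already} exposes a subgradient of the nonsmooth regularizer at the new iterate; adding back the missing smooth gradients then recovers an explicit element of $\partial\Phi_i(z_i^{t+1})$ whose norm I can control by the step difference $\normf{z_i^{t+1}-z_i^t}$.

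Concretely, I would first unfold the $U$-block. By definition of $\prox^r$, the update $U_i^{t+1}=\prox^r_{\nu\kappa,\nu\lambda}(U_i^t-\nu\nabla_{U_i}f(U_i^t,V_i^t))$ satisfies
\[
-\tfrac1\nu\big(U_i^{t+1}-U_i^t\big)-\nabla_{U_i}f(U_i^t,V_i^t)\in\partial R(U_i^{t+1}).
\]
Adding $\nabla_{U_i}f(U_i^{t+1},V_i^{t+1})$ produces an element $w_U^{t+1}\in\partial_{U_i}\Phi_i(z_i^{t+1})$ equal to $\nabla_{U_i}f(z_i^{t+1})-\nabla_{U_i}f(z_i^t)-\tfrac1\nu(U_i^{t+1}-U_i^t)$, so that the triangle inequality and Lipschitz continuity of $\nabla f$ give $\normf{w_U^{t+1}}\le(L_f+\tfrac1\nu)\normf{U_i^{t+1}-U_i^t}+L_f\normf{V_i^{t+1}-V_i^t}$. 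Repeating this for the $V$-block, the additional smooth penalty $P$ contributes a further $\nabla P$ difference that is itself Lipschitz. Since $R(U)+R(V)$ is block-separable, $w^{t+1}=(w_U^{t+1},w_V^{t+1})$ lies in $\partial\Phi_i(z_i^{t+1})$, whence $\operatorname{dist}(0,\partial\Phi_i(z_i^{t+1}))\le\normf{w^{t+1}}\le\rho\,\normf{z_i^{t+1}-z_i^t}$ after collecting the constants.

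Two points require care before the Lipschitz estimate applies. The first is that the $V$-update composes two proximal maps, $\prox^p_{\xi\gamma}\circ\prox^r_{\xi\kappa,\xi\lambda}$, so the inner $\prox^r$ only yields a subgradient of $R$ at the intermediate point $W$ rather than at $V_i^{t+1}$. I would resolve this either by folding the smooth quadratic $P$ into the coupling, so that a single $\prox^r$ governs the block, or by controlling the mismatch directly: since $\prox^p$ is the affine $(1+\xi\gamma)^{-1}$-contraction with $W-V_i^{t+1}=\tfrac{\xi}{2}\nabla P(V_i^{t+1})$, the extra terms stay proportional to the step and are absorbed into $\rho$. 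The second—and the real crux—is that $\nabla f=2(UV-A)V^\top$ is only \emph{locally} Lipschitz, so obtaining a \emph{uniform} modulus $L_f$ requires the iterates to stay in a compact set and the step sizes $\nu,\xi$ to be bounded below. I would establish the former from coercivity of $\Phi_i$: each elastic net in $R$ grows quadratically for large arguments, hence $\Phi_i$ has bounded sublevel sets, and the sufficient-decrease property (Lem.~\ref{lem:sufficient_decrease}) confines the whole sequence to the initial sublevel set, on which $\nabla f$ and $\nabla P$ are uniformly Lipschitz. For the Lipschitz-step variant $\nu,\xi=\sfrac1L$ are constant; for the multiplicative-update variant one additionally argues the effective steps remain bounded away from zero. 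Finally, if the inertial extrapolation of iPALM is retained, the identical computation produces an extra $\normf{z_i^t-z_i^{t-1}}$ term, which is merged into $\rho$ via the telescoping used alongside the descent lemma.
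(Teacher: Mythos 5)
Your proposal is correct, and it takes a genuinely different---and more complete---route than the paper's own proof. The paper decomposes $\partial\Phi_i$ into the smooth gradients plus the piecewise subdifferential $\partial R$ (a case analysis over which branch of the elastic-net minimum is active), peels off the proximity term $\frac{\gamma}{2}(V-V_i^t)$ by the triangle inequality, and then asserts the crucial inequality $\normf{\nabla_V\Phi_i+\partial R(V_i)}\leq\frac{\rho}{2}\normf{V-V_i^t}$ outright; the first-order optimality condition of the proximal step is never invoked, so the one step that actually needs an argument is left unproved. Your plan supplies exactly that step: reading $-\frac{1}{\nu}(U_i^{t+1}-U_i^t)-\nabla_{U_i}f(U_i^t,V_i^t)\in\partial R(U_i^{t+1})$ off the prox update, adding back the smooth gradients at the new iterate, and bounding the resulting explicit subgradient by Lipschitz continuity is the standard subgradient-lower-bound argument of \citet{Bolte:2014:Proximal} and \citet{Attouch:2013:Convergence}, i.e.\ precisely what the paper's assertion implicitly presupposes. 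You also confront two issues the paper passes over silently: the merely local Lipschitz continuity of $\nabla f$ (your coercivity-plus-sufficient-decrease confinement to a sublevel set is the right repair, and it is needed to make $\rho$ uniform in $t$) and the composition $\prox^p_{\xi\gamma}\circ\prox^r_{\xi\kappa,\xi\lambda}$ in the $V$-update. On the latter, one caveat: the mismatch between the intermediate point $W$ and $V_i^{t+1}$ satisfies $\normf{W-V_i^{t+1}}=\xi\gamma\normf{V_i^{t+1}-\Vb}$, which is proportional to the dissimilarity from the aggregate rather than to the step $\normf{V_i^{t+1}-V_i^t}$, and $\partial R$ jumps at the kinks of the regularizer (at $x=\frac12$), so a Lipschitz transfer of the inner subgradient from $W$ to $V_i^{t+1}$ is not available in general; your first resolution---folding the smooth quadratic $P$ into the coupling so that a single $\prox^r$ governs the block---is therefore the clean path, whereas the ``absorb into $\rho$'' fallback would at best yield an additive error controlled via the bounded-dissimilarity property of Lem.~\ref{thm:local_convergence}, not the multiplicative bound claimed. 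In short: the paper's proof is shorter but its central inequality is an assertion; yours is the argument that would actually establish it.
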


\begin{proof}
    To show that the lemma holds, it suffices that we bound each subgradient in the set $\partial\Phi_i(z_{k+1})$ separately.
    Focussing on the $V_i$-block, we want to show
    \[
        \normf{w^{t+1}} \leq \rho_2 \normf{V_i^{t+1}-V_i^{t}}
    \]
    for all
    \(
    w^{t+1} \in \partial \Phi_i(V_i^{t+1})
    \)
    restricted to the  $V_i^{t+1}$-block (analogously repeating the below for $U_i^{t+1}$).
    Because the subdifferential of the maximum-term $\max \{ \overset{\rm\Rom{1}}{r}(x), \overset{\rm\Rom{2}}{r}(x) \}$ is the union of the subdifferentials of its active parts,
    and our regularizer is piecewise convex, we obtain three gradients per block:
    \[
        \partial \Phi_i(U_i, V_i) = \nabla_{V_i} \frac{1}{2}\normf{A_i - U_iV_i}^2 + \nabla_{V_i} \frac{1}{2}\normf{V_i - V^t_i}^2 + \partial R(V_i)\;,
    \]
    \[
        \partial R(V_i) =
        \begin{cases}
            \nabla_{V_i} \overset{\rm\Rom{1}}{r}(V_i)                                               & R(V_i) < R(V_i-1) \\
            \conv(\nabla_{V_i} \overset{\rm\Rom{2}}{r}(V_i), \nabla_V \overset{\rm\Rom{1}}{r}(V_i)) & R(V_i) = R(V_i-1) \\
            \nabla_{V_i} \overset{\rm\Rom{2}}{r}(V_i)                                               & R(V_i) > R(V_i-1) \\
        \end{cases} \;.
    \]
    Next, we bound the norm of the first subdifferential
    \begin{align*}
             & \normf{\nabla_{V} \Phi_i + \partial R(V) + \frac{\gamma}{2} (V - V^t_i)}        \\
        \leq & \normf{\nabla_{V} \Phi_i + \partial R(V_i)} + \frac{\gamma}{2}\normf{V - V^t_i} \\
        \leq & \frac{\rho}{2} \normf{V - V^t_i} + \frac{\gamma}{2}\normf{V - V^t_i}            \\
        \leq & \max\{\rho,\gamma\} \frac{1}{2} \normf{V - V^t_i} \;.
    \end{align*}
    Repeating for the other cases, the total bound $\rho$ is the maximum per block and per subdifferential bounds.
    Based on Lem.~\ref{lem:sufficient_decrease},
    under the assumption that $\Phi_i$ is continuous on its domain,
    and provided that there exists a convergent subsequence (i.e., condition (a)),
    the continuity condition required in \cite{Attouch:2013:Convergence} holds,
    i.e., there exists a subsequence $\{z^t_{i}\}_{k\in \NN}$ and a point $z_i^{\ast}$ such that
    \[ 
        z^t_{i} \to z^{\ast}_i \quad \text{and} \quad \Phi_i(z^t_{i}) \to \Phi_i(z^{\ast})\ \text{as}\ t \to \infty\;. 
    \]
\end{proof}

\begin{lemma}[Sufficient Decrease Property]
    \label{lem:sufficient_decrease}
    For the sequence of points $\{z^t\}_k$ generated by the block-coordinate method in Alg.~\ref{alg:feddc},
    then
    \[
        \Phi_i(z_i^{t+1}) \leq \Phi_i(z_i^{t}) - \rho_1 \normfs{z_i^{t+1} - z_i^{t}}\;.
    \]
\end{lemma}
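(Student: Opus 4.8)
The plan is to treat the per-client objective as a smooth coupling term plus separable nonsmooth pieces, $\Phi_i(U,V) = H(U,V) + R(U) + R(V) + P(V)$ with $H(U,V) \defeq \normfs{A_i - UV}$ and $P(V) \defeq \gamma\normfs{V - \Vb}$, and to apply the standard proximal-gradient (PALM) descent estimate blockwise: once for the $U$-step and once for the $V$-step of Eq.~\ref{eq:update}. First I would record the two block-Lipschitz moduli of the coupling gradient: for fixed $V$ the map $U \mapsto \nabla_U H(U,V)$ is Lipschitz with modulus $L_U = 2\|VV^\top\|_2$, and for fixed $U$ the map $V\mapsto \nabla_V H(U,V)$ is Lipschitz with modulus $L_V = 2\|U^\top U\|_2$. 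Since $P$ is a smooth quadratic, I would fold it into the smooth part of the $V$-block, so that its gradient contributes an extra $2\gamma$ to an effective modulus $\tilde L_V = L_V + 2\gamma$, leaving only $R$ to be handled by a single prox; the firmly-nonexpansive step $\prox^p$ towards $\Vb$ is then seen as the backward step on $P$.

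The core estimate is the proximal descent lemma: for a smooth $f$ with $L$-Lipschitz gradient and a closed $g$, a step $x^{+} = \prox_{g}^{\tau}(x - \tfrac{1}{\tau}\nabla f(x))$ with proximal parameter $\tau > L$ satisfies $(f+g)(x^{+}) \le (f+g)(x) - \tfrac{\tau - L}{2}\normfs{x^{+}-x}$. This follows by adding the descent inequality $f(x^{+}) \le f(x) + \langle \nabla f(x), x^{+}-x\rangle + \tfrac{L}{2}\normfs{x^{+}-x}$ to the first-order optimality of the prox, comparing its objective at $x^{+}$ and at $x$. Applying this to the $U$-block with $\nu < 1/L_U$ gives $\Phi_i(U_i^{t+1},V_i^{t}) \le \Phi_i(U_i^{t},V_i^{t}) - \tfrac{c_U}{2}\normfs{U_i^{t+1}-U_i^{t}}$ with $c_U = 1/\nu - L_U$, and applying it to the $V$-block (with $P$ absorbed and $\xi < 1/\tilde L_V$) gives $\Phi_i(U_i^{t+1},V_i^{t+1}) \le \Phi_i(U_i^{t+1},V_i^{t}) - \tfrac{c_V}{2}\normfs{V_i^{t+1}-V_i^{t}}$ with $c_V = 1/\xi - \tilde L_V$. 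Chaining the two estimates and using $\normfs{z_i^{t+1}-z_i^{t}} = \normfs{U_i^{t+1}-U_i^{t}} + \normfs{V_i^{t+1}-V_i^{t}}$ yields the claimed bound with $\rho_1 = \tfrac12\min\{c_U,c_V\}$.

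Two points need care, and I expect the second to be the main obstacle. First, obtaining a strictly positive $\rho_1$ forces the step sizes strictly below the inverse moduli (the choice $\eta = 1/L$ gives only monotone, not \emph{sufficient}, decrease); moreover, if the inertial iPALM variant is used, the decrease holds for a Lyapunov function combining $\Phi_i$ with the momentum gap rather than for $\Phi_i$ directly, which I would handle following Pock--Sabach~\cite{Pock:2016:Inertial}. Second, the moduli $L_U,L_V$ depend on the current iterates through $\|V\|$ and $\|U\|$, so a \emph{uniform} $\rho_1$ requires the iterates to remain bounded; I expect this boundedness, inherited from the coercivity of the elastic-net regularizer $R$ together with the proximity term $P$, to be the crux, since the coupling between block moduli and iterate norms is what links this lemma to the bounded-difference claim of Lem.~\ref{thm:local_convergence}. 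The communication resets $V_i \gets \Vb$ lie outside the local PALM recursion and are accounted for separately there rather than in this per-step decrease.
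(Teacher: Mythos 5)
Your proposal is correct in substance, but it takes a genuinely different route from the paper's proof. You run the standard PALM sufficient-decrease argument \citep{Bolte:2014:Proximal}: compute the block Lipschitz moduli $L_U = 2\|VV^\top\|_2$ and $\tilde{L}_V = 2\|U^\top U\|_2 + 2\gamma$, apply the proximal descent lemma once per block, and chain the two estimates, which yields the explicit constant $\rho_1 = \sfrac12\min\{c_U, c_V\}$; the caveats you flag (step sizes strictly below $1/L$, the iPALM Lyapunov function of \citet{Pock:2016:Inertial}, and boundedness of the iterates for a \emph{uniform} $\rho_1$) are all genuine and are the right things to worry about. The paper argues quite differently: it uses only the variational characterization of the two proximal maps---each of $\prox^r$ and $\prox^p$ minimizes its own subproblem, hence stays in a bounded proximity of its input---and then chains a list of assumed relationships, one of which is the monotonicity $\Phi_i(z_i^{t+1}) \le \Phi_i(z_i^{t})$ itself; its final display moreover terminates in the positive-sign bound $\Phi_i(z_i^{t+1}) - \Phi_i(z_i^{t}) \le \rho\normfs{z_i^{t+1}-z_i^{t}}$ rather than the claimed negative-sign decrease, so your derivation is the more rigorous of the two and is the one that actually produces the stated inequality with explicit constants. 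The one mismatch you should repair: folding $P$ into the smooth part analyzes a single forward--backward $V$-step on $(H+P)+R$, whereas Alg.~\ref{alg:feddc} composes two proximal maps, $\prox^p_{\xi\gamma}\circ\prox^r_{\xi\kappa,\xi\lambda}$, after the gradient step on $H$ alone; these two updates do not coincide, so a short additional estimate (e.g., a descent inequality for the backward step on the quadratic $P$ applied after the $R$-prox) is needed before your chained bound applies to the algorithm as written---this sequential structure is precisely what the paper's proof handles, albeit informally, by treating the two projections one after the other.
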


\begin{proof}
    The loss function for the $V_i$-block in our local block-coordinate descent is
    \[
        \normfs{A_i - U_i^{t}V_i^{t+1}} + \normfs{V_i^{t+1} - \Vb} + R(V_i^{t+1})\;.
    \]
    Likewise for the $U_i$-block
    \[
        \normfs{A_i - U_i^{t+1}V_i^{t+1}} + R(U_i^{t+1})\;.
    \]
    After taking a gradient step, Alg.~\ref{alg:feddc} proceeds with a \emph{Boolean projection} regarding $R$ (for $U$ and $V$ blocks) and a \emph{proximity projection} to $\Vb$ (only for $V$).

    We proceed with the $V_i$-block, while the proof for the $U_i$-block is analogous.
    First, the \textbf{Boolean proximal} projection operator $\prox_{R}(V^{t})$ yields \emph{a minimizer} of the optimization problem
    \[ %
        \ovex{\scriptscriptstyle\Rom{1}}{V}{k} \gets \argmin_Y \sfrac12 \normfs{V^{t} - Y} + R(Y) \;.
    \]
    By definition, $\ovex{\scriptscriptstyle\Rom{1}}{V}{k}_i$ lies in a $\rho_{\Rom{1}}$-bounded proximity to $V_i^{t}$.
    Second, the \textbf{proximity proximal} projection operator $\prox_{\gamma\Vb}(\ovex{\scriptscriptstyle\Rom{1}}{V}{k}_i)$ is the minimizer of
    \[
        \ovex{\scriptscriptstyle\rom{2}}{V}{k}_i \gets \argmin_Y \sfrac12 \normfs{\ovex{\scriptscriptstyle\Rom{1}}{V}{k}_i - Y} + \nu \sfrac12 \normfs{\Vb - Y} \;.
    \]
    By definition, $\ovex{\scriptscriptstyle\rom{2}}{V}{k}_i$ lies in the $\rho_{\Rom{2}}$-bounded proximity to $\ovex{\scriptscriptstyle\Rom{1}}{V}{k}_i$.
    Repeating for the $U_i$-blocks and using a transitivity argument, by using that our gradients have finite Lipschitz moduli, we conclude that both projections lie in a $\rho$-bounded region around $z_i^t$.

    Using the following relationships,
    \begin{align*}
        \Phi_i(z_i^{t+1})                                  & \leq \Phi_i(z_i^{t}) + \rho\normfs{z_i^{t+1}-z_i^{t}}\,,
        \Phi_i(z_i^{t+1}) - \rho\normfs{z_i^{t+1}-z_i^{t}} & \leq \Phi_i(z_i^{t+1})\ \text{, and}
        \Phi_i(z_i^{t+1})                                  & \leq \Phi_i(z_i^{t}).
    \end{align*}
    we now bound the loss reduction in terms of the norm of differences in the following.
    \begin{align*}
        \Phi_i(z_i^{t+1})                                 & \leq \Phi_i(z_i^{t})                                               \\
        \Phi_i(z_i^{t+1}) - \rho\normfs{z^{t+1}-z^{t}}     & \leq \Phi_i(z_i^{t})                                               \\
        \Phi_i(z_i^{t+1}) - \rho\normfs{z_i^{t+1}-z_i^{t}} & \leq \Phi_i(z_i^{t-1}) + \rho\normfs{z_i^{t}-z_i^{t-1}}             \\
        \Phi_i(z_i^{t+1}) - \rho\normfs{z^{t+1}-z^{t}}     & \leq \Phi_i(z_i^{t}) + \rho\normfs{z_i^{t}-z^{t-1}}                 \\
        \Phi_i(z_i^{t+1}) - \Phi_i(z_i^{t})               & \leq \rho\normfs{z_i^{t}-z_i^{t-1}} + \rho\normfs{z_i^{t+1}-z_i^{t}} \\
        \Phi_i(z_i^{t+1}) - \Phi_i(z_i^{t})               & \leq \rho\normfs{z_i^{t+1}-z_i^{t}}
    \end{align*}
\end{proof}

\begin{lemma}[Uniformized Kurdyka-Łojasiewicz (KŁ)]
    \label{lem:klp}
    $\Phi_i$ is a KŁ function.
\end{lemma}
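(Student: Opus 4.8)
The plan is to establish that $\Phi_i$ satisfies the KŁ property by showing that it belongs to the class of \emph{semi-algebraic} functions, since every proper, lower semi-continuous semi-algebraic function is known to satisfy the KŁ inequality at each of its points~\citep{Attouch:2013:Convergence,Bolte:2014:Proximal}. This route sidesteps any direct verification of the desingularizing inequality and instead exploits the rich closure properties of the semi-algebraic class, which are exactly what make it the natural setting for nonconvex, nonsmooth splitting schemes.

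First I would decompose $\Phi_i(U_i, V_i) = \normfs{A_i - U_iV_i} + R(U_i) + R(V_i) + P(V_i)$ and argue that each summand is semi-algebraic. The data-fidelity term $\normfs{A_i - U_iV_i}$ is a polynomial in the entries of $U_i$ and $V_i$, hence semi-algebraic, and the proximity penalty $P(V_i) = \gamma \normfs{V_i - \Vb}$ is likewise a quadratic polynomial. The regularizer $R$ in Eq.~\eqref{eq:elb_reg} is a separable sum over matrix entries of $\min\{\kappa|x| + \lambda x^2, \kappa|x-1| + \lambda(x-1)^2\}$; here each absolute value $|x|$ is semi-algebraic (its graph is the semi-algebraic set $\{(x,y): (y=x,\, x\geq 0) \lor (y=-x,\, x\leq 0)\}$), the quadratic terms are polynomials, and the pointwise minimum of two semi-algebraic functions is again semi-algebraic.

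Second, I would invoke closure of the semi-algebraic class under finite sums to conclude that $\Phi_i$ itself is semi-algebraic, and observe that it is real-valued, continuous, and bounded below, hence proper and lower semi-continuous. The standard result that such functions satisfy the KŁ property~\citep{Bolte:2014:Proximal} then yields that $\Phi_i$ is a KŁ function. The \emph{uniformized} form needed by the convergence argument follows by a compactness step: since Lem.~\ref{lem:sufficient_decrease} together with boundedness confines the iterates to a compact set, the limit set of critical points is compact and connected, on a neighborhood of which the pointwise KŁ inequalities can be merged into a single uniform inequality with a common desingularizing function.

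The main obstacle I anticipate is the careful treatment of the nonsmooth $\min$-of-elastic-nets structure of $R$. I must confirm that the pointwise minimum preserves semi-algebraicity---which it does, since the graph of a finite min is a finite union of the individual graphs intersected with the (polynomially defined) regions on which each branch is active---and that separability over matrix entries does not break the property, which again follows from closure under finite sums. Once the semi-algebraicity of $R$ is pinned down, the remainder reduces to cataloguing closure properties and is routine.
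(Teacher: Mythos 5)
Your proposal is correct and follows essentially the same route as the paper, which likewise disposes of the lemma by noting that $\Phi_i$ is built from semi-algebraic pieces (the paper cites its composition from $p$-norms, $p\in\{1,2\}$, and indicator functions) and invoking the standard fact that proper, lower semi-continuous semi-algebraic functions are KŁ~\citep{Attouch:2013:Convergence}. Your write-up is in fact more careful than the paper's one-line argument: you explicitly verify that the pointwise $\min$ structure of the \textsc{Elb} regularizer preserves semi-algebraicity and you spell out the compactness step needed to pass from the pointwise to the uniformized KŁ inequality, both of which the paper leaves implicit.
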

\begin{proof}
    $\Phi_i$ function is composed of $p$-norms ($p\in\{1,2\}$), and indicator functions,
    and therefore satisfy the KŁ-property~\cite{Attouch:2013:Convergence}. %
\end{proof}

\section{Competitors}\label{sec:apx:competitors}
For a given aggregation function (such as rounded averaging~\eqref{eq:aggmean}, majority voting~\eqref{eq:aggvote}, or logical \texttt{or}~\eqref{eq:aggor}), 
we summarize the federation strategy of centralized BMF algorithms in Alg.~\ref{alg:aggregated_bmf}.

\SetKwFor{local}{Locally at client}{do}{}
\SetKwFor{coord}{Centrally at server}{do}{}
\begin{algorithm}[hb]
    \caption{Aggregated BMF}
    \label{alg:aggregated_bmf}
    \KwIn{$C$ clients with local matrices $A_1,\dots,A_C$, local BMF algorithm $\mathcal{A}$, aggregation function $\operatorname{aggregate}$}
    \KwOut{ local feature matrices $U_1,\dots,U_C$, global coefficient matrix $\widehat{V}$}
    \local{$i$}{
        $U_i, V_i \gets \mathcal{A}(A_i)$\;
    }
    \coord{}{
        receive $V_1,\dots,V_C$\;
        $\widehat{V} \gets \operatorname{aggregate}(V_1,\dots,V_C)$\;
        transmit $\widehat{V}$ to all clients\;
    }
    \local{$i$}{
        receive $\widehat{V}$ from the server \; 
        assign $V_i \gets \widehat{V}$\;
    }
\end{algorithm}

\subsection{Obtaining Boolean Matrices from \zhang's Factorization}
The relaxation-based binary matrix factorization of \zhang \citep{Zhang:2007:Binary} does not necessarily yield Boolean factors upon convergence. 
Furthermore,  
this method yields matrices that do not lend themselves to rounding, 
such that in practice, rounding does not yield desirable results \emph{unless} the rounding threshold is carefully chosen.
To choose well-factorizing rounding thresholds, 
we take inspiration from \textsc{Primp}~\citep{Hess:2017:Primping}, searching those thresholds that minimize the reconstruction loss, 
\[ \sum_{c \in [C]} \|A^c - [U_{ij}^c \geq \alpha]_{ij} \circ [V_{ij}^c \geq \beta]_{ij}\| \;,\]
from the equi-distant grid between \num{1e-12} and 1 containing 100 points in each direction.

\section{Datasets}\label{sec:datasets}
To explore the realm of \textbf{recommendation systems}, we have included \emph{Goodreads}~\citep{Kotkov:2022:Goodreads} for book recommendations, as well as \emph{Movielens}~\citep{Harper:2016:Movielens} and \emph{Netflix}~\citep{NetflixPrize} for movie recommendations. 
To focus on positive ratings, we binarized user ratings, setting ratings $\geq 3.5$ to $1$.

In the field of \textbf{life sciences}, we consider cancer genomics through \emph{TCGA}~\citep{TCGA} and single-cell proteomics using 
\emph{HPA}~\citep{Bakken:2021:ComparativeCA,Sjöstedt:2020:hpa}. 
Specifically, \emph{TCGA} records $1$s for gene expressions in the upper 95\% quantile and \emph{HPA} records by $1$ if RNA has been observed in single cells.

For \textbf{social science} inquiries, we investigate poverty \emph{(P)} and income \emph{(I}) analysis using the \emph{Census}~\citep{Census:2023} dataset.
To binarize, we employ one-hot encoding based on the features recommended by Folktables~\citep{ding:2021:retiring}.

In the domain of \textbf{natural language processing}, we focus on higher-order word co-occurrences using ArXiv abstracts from the cs.LG category~\citep{Arxiv:2023}. 
Each paper corresponds to a row whose columns are $1$ if the corresponding word in our vocabulary has been used in its abstract.
The vocabulary consists of words with a minimum frequency of $1$ \textpertenthousand\ in ArXiv cs.LG abstracts (\emph{cs.LG~R}) and their lemmatized, stop-word-free counterparts (\emph{cs.LG}).

We summarize extents, density, and chosen component counts for each real-world dataset in Appendix~\ref{sec:reproducibility}, Table~\ref{tbl:data}.

\begin{table}[t]
    \centering\small
    \caption{Real-world datasets from 4 diverse domains. 
        We show extents, density, and the selected number of components for $10$ real-world datasets.}
    \vspace*{0.5\baselineskip}
    \label{tbl:data}
    
\begin{tabular}{lrrrrr}
\toprule
Dataset & Rows & Cols & Density & Clients & Components \\
\midrule
ACS Inc & 1630167 & 998 & 0.010 & 50 & 20 \\
ACS Pov & 3271346 & 836 & 0.024 & 50 & 20 \\
cs.LG & 145981 & 14570 & 0.005 & 50 & 50 \\
Goodreads & 350332 & 9374 & 0.001 & 50 & 50 \\
HPA Brains & 76533 & 20082 & 0.239 & 50 & 100 \\
Movielens & 162541 & 62423 & 0.002 & 50 & 20 \\
Netflix & 480189 & 17770 & 0.007 & 50 & 20 \\
TCGA & 10459 & 20530 & 0.019 & 50 & 33 \\
\bottomrule
\end{tabular}

\end{table}

\section{Reproducibility}
\label{sec:reproducibility}
Supplementing the information provided in Sec.~\ref{sec:experiments},
here, we provide hyperparameter choices for \felb and \felbmu.
We use the iPALM optimization approach for \felb and \felbmu.
Because both algorithms exhibited relatively stable performance fluctuations when it came to tuning, we used the same set of hyperparameters for each experiment and each dataset, thus omitting the commonly necessary hyperparameter tuning step.
In all experiments with\felb and \felbmu, we used 
the regularizer coefficients $\lambda = 0.1$ and $\kappa = 0.001$, 
a regularization rate $\lambda_t = \lambda \cdot 1.05^t$,
an iPALM inertial parameter $\beta = 0.001$,
a maximum number of iterations of $100$,
and a number of local rounds per iteration of $1$, $10$, or $50$, as indicated by the experiments.
For \elb, we choose $\kappa = 0.01$,  $\lambda = 0.01$, $\lambda_t = \lambda \cdot 1.02^t$, and $\beta = 0.01$.
We provide \zhang and \elb with a larger iteration limit of $1\ 000$, multiplying \felb's local rounds by its iteration count.
For \asso, we set gain, loss, and threshold parameters to $1.0$.
For \mebf, we use a threshold of $0.5$ and a cover limit of $0.95$. 

\section{Additional Experiments}
\label{sec:apx:experiments}
Complementing the discussion in Sec.~\ref{sec:experiments},
here, we show additional results for
\asso, \grecond, \mebf, \elb, and \zhang, as well as \felb and \felbmu, for all experiments.
We focus on the quantification not present in the main body of this paper.
Here, we aim to answer the following additional questions.
\begin{itemize}[noitemsep,topsep=0pt,parsep=0pt,partopsep=0pt]
    \item[\bf Q4] How does client drift impact real-world performance?
    \item[\bf Q5] How different the post-hoc aggregations for BMF are?
    \item[\bf Q6] How stably does our methods converge?
    \item[\bf Q7] How robust do we handle client drift?
    \item[\bf Q8] How achievable is differential privacy in different circumstances?
\end{itemize}

\subsection{Real-world Experiments}\label{sec:apx:exp:realworld_rmsd}
In addition to results presented in Table~\ref{tab:realworld}, 
we provide the RMSD in Table~\ref{tab:apx:realworld:rmsd},
where we see that the \felbmu and \felb are the two best performing methods, followed by \grecond.

\begin{table}[h]
    \caption{\felb and \felbmu consistently achieve top performances.
    We illustrate the RMSD of \asso, \grecond, \mebf, \elb, and \zhang under voting aggregation, as well as federated \felb, and \felbmu
    on $8$ real-world data across $50$ clients.
    We highlight the best algorithm with \textbf{bold}, the second best with \underline{underline}, and indicate missing data by a dash {}--{}.}
    \label{tab:apx:realworld:rmsd}
    \centering
    \small
    \vspace*{0.5\baselineskip}
    \begin{tabular}{lrrrrrrr}
\toprule
Dataset & {\sc Asso$^\textsc{v}$} & {\sc Mebf$^\textsc{v}$} & {\sc Grecond$^\textsc{v}$} & {\sc Zhang$^\textsc{v}$} & {\sc Elbmf$^\textsc{v}$} & {\sc Felb$^{\textsc{mu}}$} & {\sc Felb} \\
\midrule
ACS Inc & 4.583 & 4.929 & \bfseries 3.485 & 5.005 & 5.005 & \underline{3.962} & 4.560 \\
ACS Pov & \underline{5.822} & -- & \bfseries 4.785 & 7.734 & 7.190 & 7.576 & 7.588 \\
cs.LG & -- & 3.398 & \bfseries 3.350 & 3.398 & 3.398 & \underline{3.372} & 3.396 \\
Goodreads & -- & 1.669 & 1.668 & -- & 1.669 & \bfseries 1.641 & \underline{1.660} \\
HPA Brain & -- & \bfseries 19.537 & -- & 24.434 & 24.434 & \underline{24.409} & 24.433 \\
Movielens & -- & 1.956 & -- & -- & 1.962 & \bfseries 1.914 & \underline{1.925} \\
Netflix & -- & 4.075 & -- & -- & 4.084 & \bfseries 3.982 & \underline{4.009} \\
TCGA & 6.858 & 6.834 & 6.871 & 6.872 & 6.872 & \bfseries 6.346 & \underline{6.420} \\
\midrule\textit{Rank} & 4.000 & 3.625 & 2.875 & 5.000 & 4.625 & \bfseries 1.750 & \underline{2.750} \\
\bottomrule
\end{tabular}

\end{table}

\subsection{Real-world Drift Experiments}\label{sec:apx:exp:realworld}
Next, because the performance depends on the communication frequency,
we evaluate our method in $3$ different scenarios: 
Rare (max $50$ client epochs), Occasional (max $10$ epochs), and Frequent synchronizations (every round).
To visualize relative performance differences, we compute the \emph{relative RMSD} 
\[
    \frac{\textsc{rmsd}(\textsc{Felb})\hphantom{^{\textsc{mu}}}}{\textsc{rmsd}(\textsc{Felb}^{\textsc{mu}})}\;,
\]
depicted in Fig.~\ref{fig:apx:rwcomm} for all real-world datasets in different synchronization regimes.
Because \asso, \grecond, \mebf, \zhang, and \elbmf are not directly federated, they are independent of the change in communication frequency and therefore omitted.
In Fig.~\ref{fig:apx:rwcomm}, we see that our algorithm maintain a high prediction performance regardless of the communication overhead. 
We observe that \felb and \felbmu perform similarly well under occasional and frequent communications.  
We observe a shrinking performance gap between \felb and \felbmu when increasing the communication frequency, almost reaching the same performance.
This indicates that \felb's larger gradient-step-sizes are responsible for a higher client drift, which is mitigated by a high communication frequency.
Regardless of being under occasional and frequent communication regime, \felb and \felbmu are the highest performing algorithms. 

\begin{figure}[htb]
    \centering
    \includegraphics{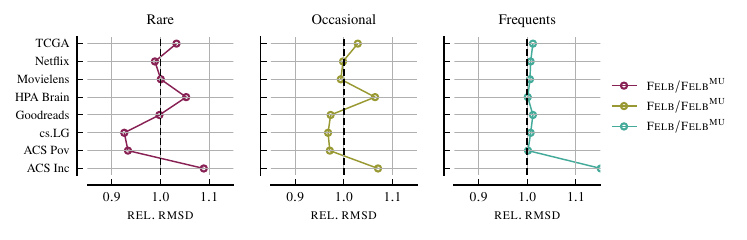}
    \caption{\felb and \felbmu perform similarly when we synchronize clients frequently, 
        while \felbmu tends to improve over \felb if we rarely synchronize. 
        We show the relative RMSD  on real-world datasets with varying communication frequencies for \felb and \felbmu.}
    \label{fig:apx:rwcomm}
\end{figure}

\subsection{Post-hoc Aggregations}\label{sec:apx:aggregations}
As there is no prior art specifically for aggregation federated BMF clients,
we seek experimentally answer which of the equations Eqs.~\eqref{eq:aggmean}--\eqref{eq:aggvote}
yield the lowest reconstruction loss.
To this end, we consider a growing number of synthetic abundant data as described for Fig.~\ref{fig:scalability}.
While we observe in Fig.~\ref{fig:apx:agg} and in Fig.~\ref{fig:apx:agg2} that \emph{rounded average} and \emph{consensus voting} are performing similarly, both significantly outperform \emph{logical or}.
For brevity, we therefore mostly report results for \emph{consensus voting} in Sec.~\ref{sec:experiments}.
\begin{figure}[htb]
    \centering
    \includegraphics{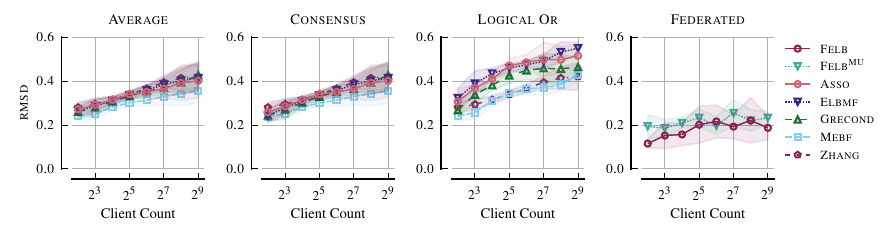}
    \caption{The Boolean matrix aggregation methods \emph{rounded average} and \emph{consensus voting} significantly outperform \emph{logical or}.
        We show the loss for post-hoc aggregated BMF methods, for growing client count with synthetic abundant data.
    }
    \label{fig:apx:agg}
\end{figure}

\begin{figure}[ht]
    \centering
    \includegraphics{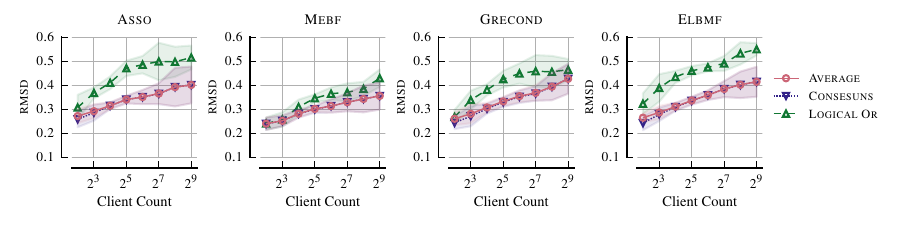}
    \caption{The Boolean matrix aggregation methods \emph{rounded average} and \emph{consensus voting} significantly outperform \emph{logical or}, depicting results specifically for post-hoc aggregated BMF methods, for growing client count with synthetic abundant data.
    }
    \label{fig:apx:agg2}
\end{figure}

\subsection{Empirical Convergence}\label{sec:apx:exp:convergece}
This study aims to investigate the empirical convergence properties of the proposed methods. 
In this study, we examine the empirical convergence properties of our methods. 
We generate synthetic data according to the procedure outlined in Sec.~\ref{sec:experiments}. 
We then measure the reconstruction loss as the number of global iteration steps increases. 
Fig.~\ref{fig:apx:convergence} demonstrates that our methods rapidly converge to a lower loss corresponding to non-Boolean solutions. 
Following a swift initial decrease, the loss only minimally increases as we approach a feasible Boolean solution upon convergence.
\begin{figure}[ht]
    \centering
    \includegraphics{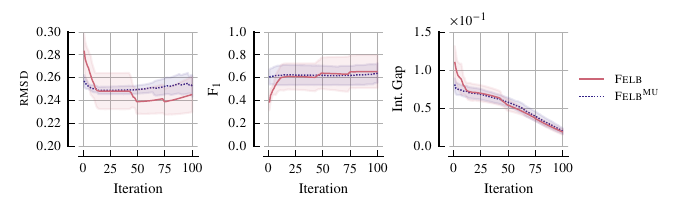}
    \caption{Our methods rapidly achieve a lower reconstruction loss for non-Boolean solutions and maintain minimal loss increase while approaching a feasible Boolean solution. 
    We illustrate the history of loss, F$_1$ score, and integrality gap over increasing number of iterations.}
    \label{fig:apx:convergence}
\end{figure}

\subsection{Client Drift}\label{sec:apx:exp:drift}
We aim to understand the impact of infrequent synchronizations on the convergence results. To investigate this, we vary the number of local iterations per client from 1 (frequent synchronizations) to 50 (infrequent synchronizations), using synthetic data. In Fig.~\ref{fig:apx:drift}, we observe that the loss is significantly affected by the increasing number of iterations. We see that the loss flattens-out after approximately 25 client local optimization epochs before synchronization. 
While our methods achieve a reasonably high F$^\ast_1$-score with respect to the ground-truth---even with infrequent synchronizations---our competitors do not show similar results.

\begin{figure}[ht]
    \centering
    \includegraphics{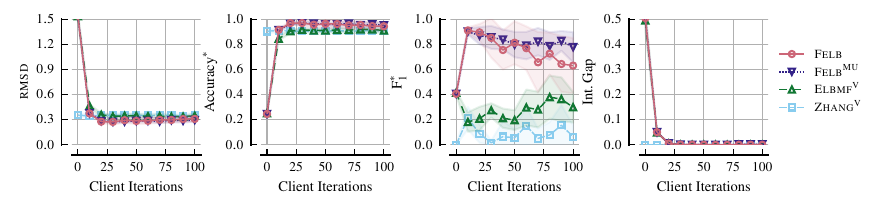}
    \caption{Our algorithm demonstrates robustness in achieving high convergence rates despite infrequent synchronizations. 
    We illustrate the history of loss, F$_1$ score, F$^\ast_1$ score regarding ground-truth, and integrality gap over increasing number of local per-client iterations before global synchronizations.}
    \label{fig:apx:drift}
\end{figure}

\subsection{Differential Privacy}\label{sec:apx:exp:privacy}
We aim to understand how differential privacy impacts reconstruction quality. 
Previously, we studied the effect of clipped noise mechanisms (Fig.~\ref{fig:dp}). 
Here, we extend this experiment to include non-clipped noise mechanisms, as shown in Figures \ref{fig:apx:dp1} and \ref{fig:apx:dp2}. 
Specifically, we apply non-clipped Gaussian and Laplacian noise to federated factorization algorithms that operate on real-valued numbers, while limiting discrete Boolean factorization algorithms to Bernoulli noise.

In Fig. \ref{fig:apx:dp1}, we observe that the F$_1$-score decrease significantly only at high differential privacy coefficients. At moderate levels, we achieve differentially private reconstructions using both clipped and non-clipped Gaussian and Laplacian noise mechanisms, as well as Bernoulli `XOR' noise.
In Fig. \ref{fig:apx:dp2}, we see that the reconstruction loss follows a similar trend for both  Gaussian and Laplacian noise mechanisms.
The Bernoulli mechanism, however, results in a much lower reduction in RMSD than in the F$_1$.
Although all methods exhibit a similar trend, \felb and \felbmu demonstrate robustness regarding differential privacy, consistently outperforming competitors in terms of RMSD and F$_1$ score.

\begin{figure}[htb]
    \centering
    \includegraphics{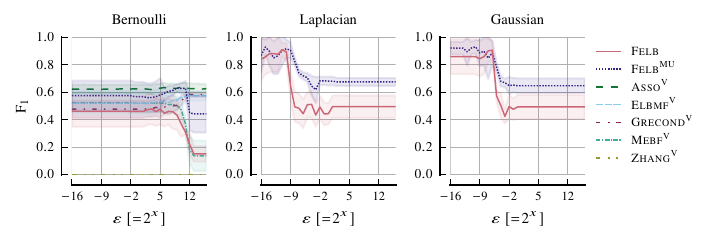}
    \caption{Our algorithms largely maintains the prediction performance for moderately high differential privacy coefficients. We depict the F$_1$-score trend across various levels of differential privacy, for non-clipped Gaussian and Laplacian noise mechanisms, as well as the Bernoulli `XOR' noise mechanism.}
    \label{fig:apx:dp1}
\end{figure}

\begin{figure}[htb]
    \centering
    \includegraphics{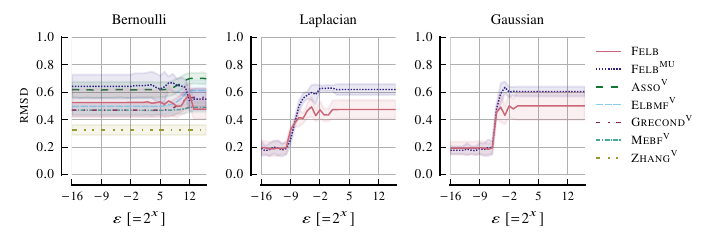}
    \caption{Our algorithms largely maintains the reconstruction quality for moderately high differential privacy coefficients. We depict the reconstruction loss trend across various levels of differential privacy, for  non-clipped Gaussian and Laplacian noise mechanisms, as well as the Bernoulli `XOR' noise mechanism.}
    \label{fig:apx:dp2}
\end{figure}

\section{Limitations} 
\label{sec:limitations}

Our research is motivated by learning from private Boolean data generated by similar sources, situated at few research centers.
As such, we focus on suitable experiments in our research, while we abstain from distant but related problems.

Firstly, our approach does not incorporate personalized federated learning (PFL), which could potentially enhance individual client performance by tailoring the model to specific client data. 
Additionally, our experimental study does not address heterogeneous data distributions across clients, which is a common scenario in real-world applications. 
Furthermore, our focus is on learning and knowledge discovery from federations involving a limited number of clients, specifically in the context of research centers. 
This is in contrast to scenarios involving millions of clients, such as those sometimes encountered in different federated learning applications. 

We experimentally demonstrate under which circumstances our method breaks,
involving experiments with noise levels~\ref{sec:noise}, privacy levels~\ref{sec:exp:dp}, client counts~\ref{sec:scalability}, dataset sizes~\ref{sec:scalability}, client-server communication intervals~\ref{sec:apx:exp:drift}, and dataset domains~\ref{sec:datasets}, thereby providing an extensive overview over practical strength and weaknesses.

\end{document}